\newcommand{\E}{\mathbb{E}}
\newcommand{\Esub}[1]{\underset{#1}{\E}}
\newcommand{\Ei}{\bar{\mathbb{E}}}
\newcommand{\Eisub}[1]{\underset{#1}{\Ei}}
\newcommand{\D}{{\mathcal{D}^{env}}}
\newcommand{\hatT}{\widehat{T}}
\newcommand{\hatM}{\widehat{M}}
\newcommand{\tT}{T}
\newcommand{\tM}{M}
\newcommand{\cS}{{\mathcal{S}}}
\newcommand{\cA}{\mathcal{A}}
\newcommand{\cT}{\mathcal{T}}
\newcommand{\cO}{\mathcal{O}}
\newcommand{\ccO}{\tilde{ \mathcal{O}}}
\newcommand{\pM}{M^p}
\newcommand{\tilM}{M^p}
\newcommand{\ppD}{{\mathcal{D}^p_{\pi^p}}}
\newcommand{\poD}{{\mathcal{D}^p_{\pi^o}}}
\newcommand{\ooD}{{\mathcal{D}^o_{\pi^o}}}
\newcommand{\Ppi}{\pi^p}
\newcommand{\oM}{M^o}
\newcommand{\oD}{{\mathcal{D}^o_{\pi^o}}}
\newcommand{\Opi}{\pi^o}
\newcommand{\cut}[1]{}
\DeclareMathOperator*{\argmax}{arg\,max}
\theoremstyle{definition}
\newtheorem{definition}{Definition}
\newtheorem{proposition}{Proposition}
\newtheorem*{proposition*}{Proposition} 
\newtheorem{assumption}{Assumption}
\newtheorem{lemma}{Lemma}
\newtheorem{theorem}{Theorem}
\title{Optimistic Model Rollouts for Pessimistic Offline Policy Optimization}
\author{
    Yuanzhao Zhai\textsuperscript{\rm 1,2},
    Yiying Li\textsuperscript{\rm 3},
    Zijian Gao\textsuperscript{\rm 1,2},
    Xudong Gong\textsuperscript{\rm 1,2}, \\
    Kele Xu\textsuperscript{\rm 1,2},
    Dawei Feng\textsuperscript{\rm 1,2}\thanks{Corresponding author.},\
    Ding Bo\textsuperscript{\rm 1,2},
    Huaimin Wang\textsuperscript{\rm 1,2} \
}
\begin{document}

\maketitle

\begin{abstract}
Model-based offline reinforcement learning (RL) has made remarkable progress, offering a promising avenue for improving generalization with synthetic model rollouts. 
Existing works primarily focus on incorporating pessimism for policy optimization, usually via constructing a Pessimistic Markov Decision Process (P-MDP). 
However, the P-MDP discourages the policies from learning in out-of-distribution (OOD) regions beyond the support of offline datasets, which can under-utilize the generalization ability of dynamics models.
In contrast, we propose constructing an Optimistic MDP (O-MDP).
 We initially observed the potential benefits of optimism brought by encouraging more OOD rollouts. 
Motivated by this observation, we present ORPO, a simple yet effective model-based offline RL framework. ORPO generates Optimistic model Rollouts for Pessimistic offline policy Optimization. 
Specifically, we train an optimistic rollout policy in the O-MDP to sample more OOD model rollouts.
Then we relabel the sampled state-action pairs with penalized rewards and optimize the output policy in the P-MDP.
Theoretically, we demonstrate that the performance of policies trained with ORPO can be lower-bounded in linear MDPs.
Experimental results show that our framework significantly outperforms P-MDP baselines by a margin of 30\%, achieving state-of-the-art performance on the widely-used benchmark.
Moreover, ORPO exhibits notable advantages in problems that require generalization.
\end{abstract}

\section{Introduction}

In scenarios where online trial-and-error are too costly or prohibited, such as autonomous driving~\cite{yu2018bdd100k}, healthcare~\cite{gottesman2019guidelines}, and robotics~\cite{mandlekar2020iris}, offline RL~\cite{levine2020offline} has emerged as a solution to leverage previously-collected datasets.
While successful, recent research~\cite{ROMI, lee2022offline} demonstrates that model-free offline RL methods typically learn overly conservative policies and lack generalization beyond the datasets.




%
Model-based approach, which leverages a learned dynamics model to generate rollouts for policy optimization~\cite{sutton1990integrated}, has been introduced to offline RL, achieving remarkable progress~\cite{MOPO, RAMBO}.
%
In the context of offline RL, the dynamic models may exhibit inaccuracies due to limited datasets.
To avoid over-estimation on out-of-distribution (OOD) data, prior methods construct a Pessimistic Markov Decision Process (P-MDP) based on uncertainty quantification of dynamics models~\cite{MOPO, MOReL}, which lower-bounds the real MDP.


Dynamics models trained in a supervised manner can exhibit refined generalization capacity for some near-distribution OOD state-action pairs, which is mainly studied and utilized in the model-based online RL ~\cite{MBPO, moerland2023model}.
Recent works~\cite{EDAC,PBRL} have demonstrated that OOD sampling can effectively regularize behaviors and enhance generalization for offline policy optimization.
However, P-MDP used in prior model-based offline RL penalizes OOD state-action pairs that have high uncertainty (Figure~\ref{fig-Dyna}), discouraging policies from sampling in OOD regions. 
Therefore, utilizing only pessimistic rollouts may under-utilize the dynamics model, thereby limiting generalization.
We verify this by designing a toy task in Figure~\ref{fig:toy}.

\begin{figure}[t]
	\label{ORPO}
	\begin{center}
		\subfigure[Prior approach.
		\label{fig-Dyna}]
		{
			\centering
			\includegraphics[width=0.41\linewidth]{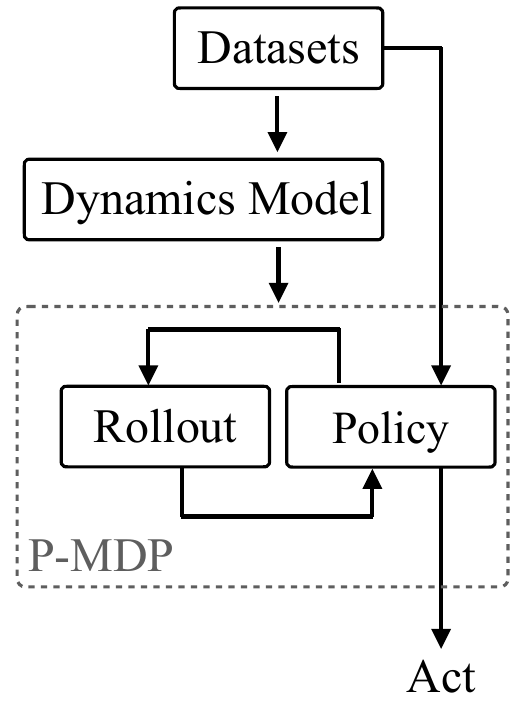}
		}
		\subfigure[ORPO framework.
		\label{fig-ORPO}]
		{
			\centering
			\includegraphics[width=0.49\linewidth]{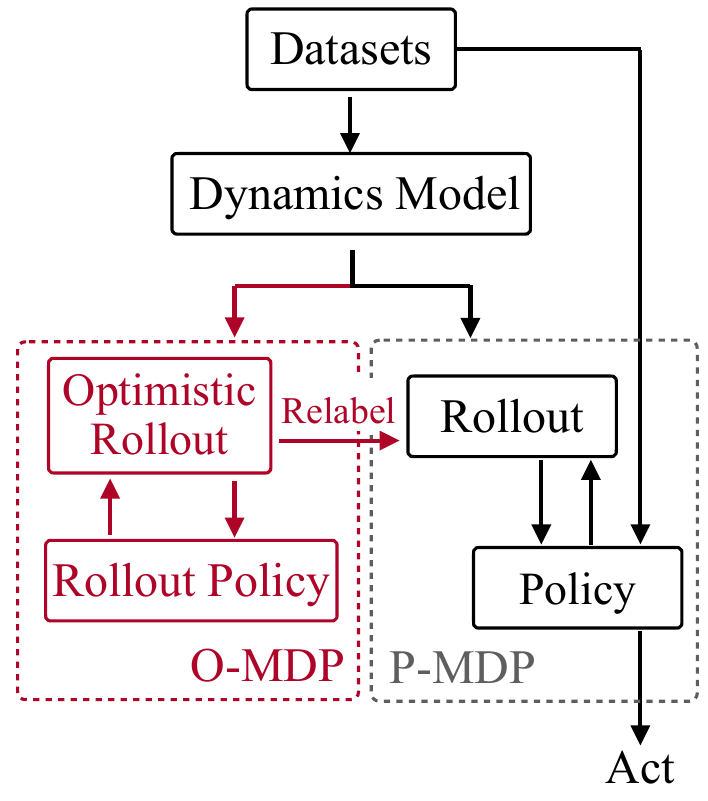}
		}
		\caption{
			(a) Previous model-based offline RL generates model rollouts and optimizes the policy within the P-MDP.
			(b) 
			We decouple the training of optimistic rollout policies from the pessimistic policy optimization.
		}
	\end{center}
\end{figure}


\begin{figure*}[!ht]
		\subfigure[Scenario of our toy experiment.
		\label{fig-toy-scene}]
		{
			\centering
			\includegraphics[width=0.213\linewidth]{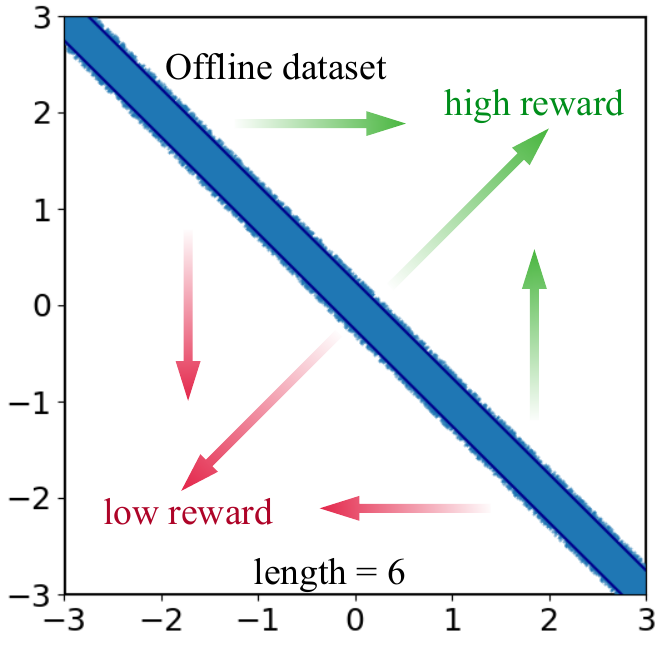}
		}
	\quad
		\subfigure[Uncertainty value of the learned dynamics model.
		\label{fig-toy-uncertainty}]
		{
			\centering
			\includegraphics[width=0.25\linewidth]{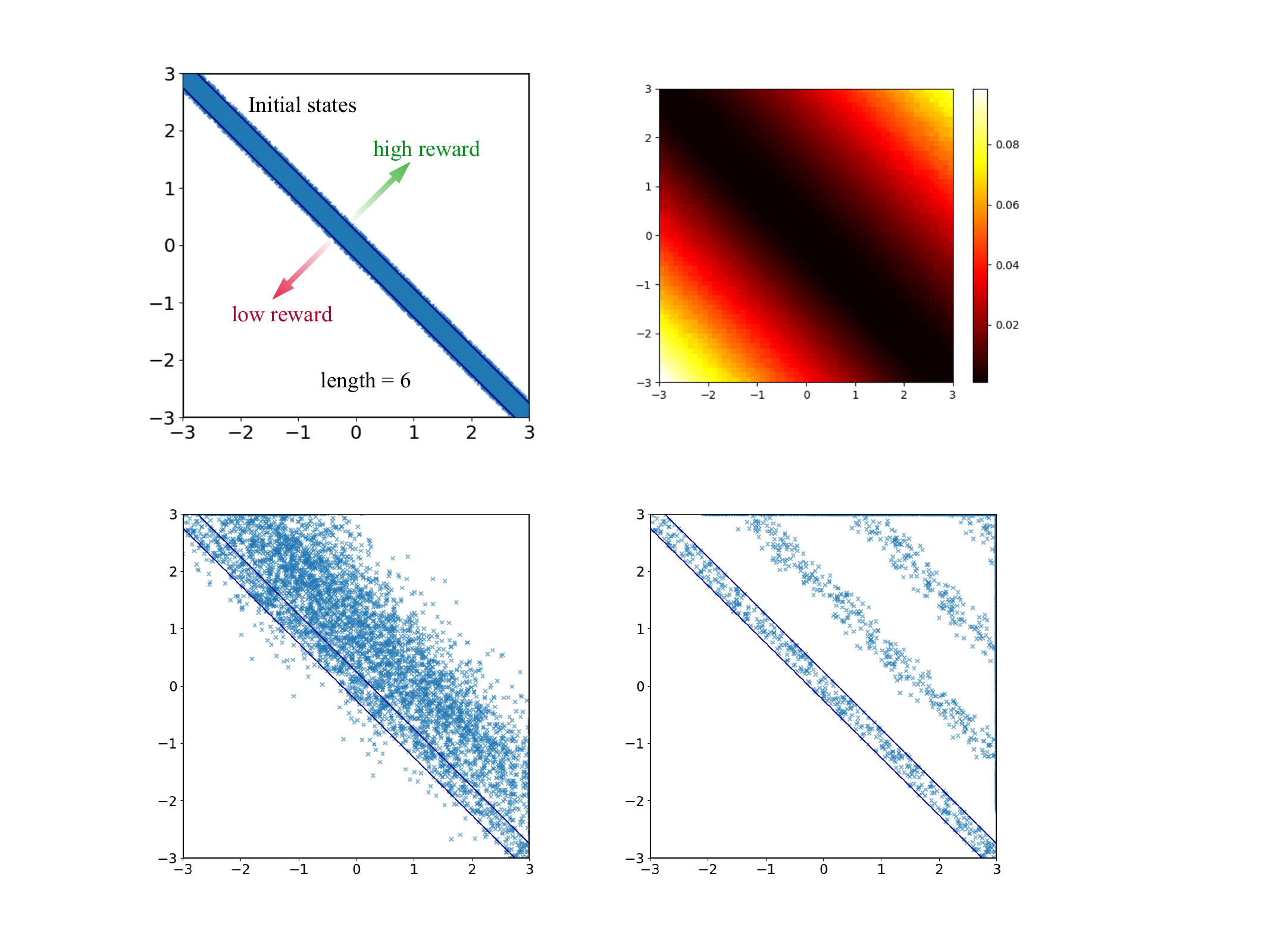}
		}
	\quad
		\subfigure[Evaluation trajectories of the policy trained in the P-MDP.
		\label{fig-toy-mopo}]
		{
			\centering
			\includegraphics[width=0.213\linewidth]{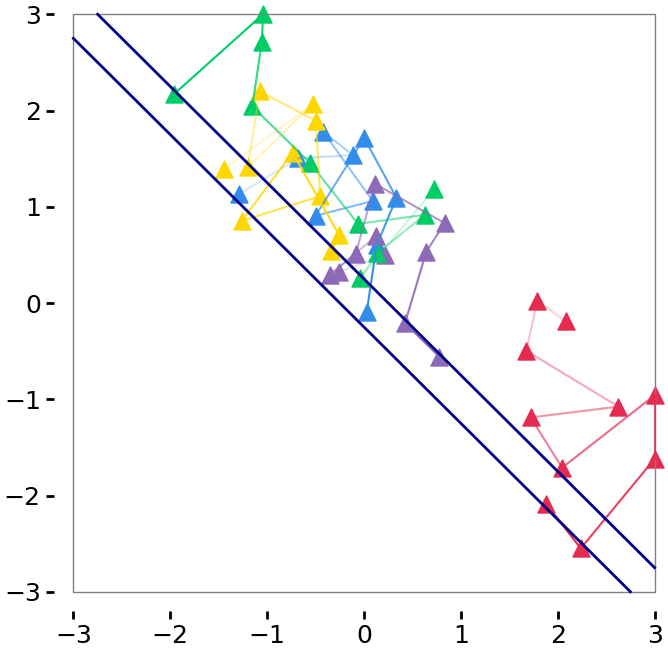}
		}
	\quad
		\subfigure[Evaluation trajectories of the policy trained with ORPO.
		\label{fig-toy-orpo}]
		{
			\centering
			\includegraphics[width=0.213\linewidth]{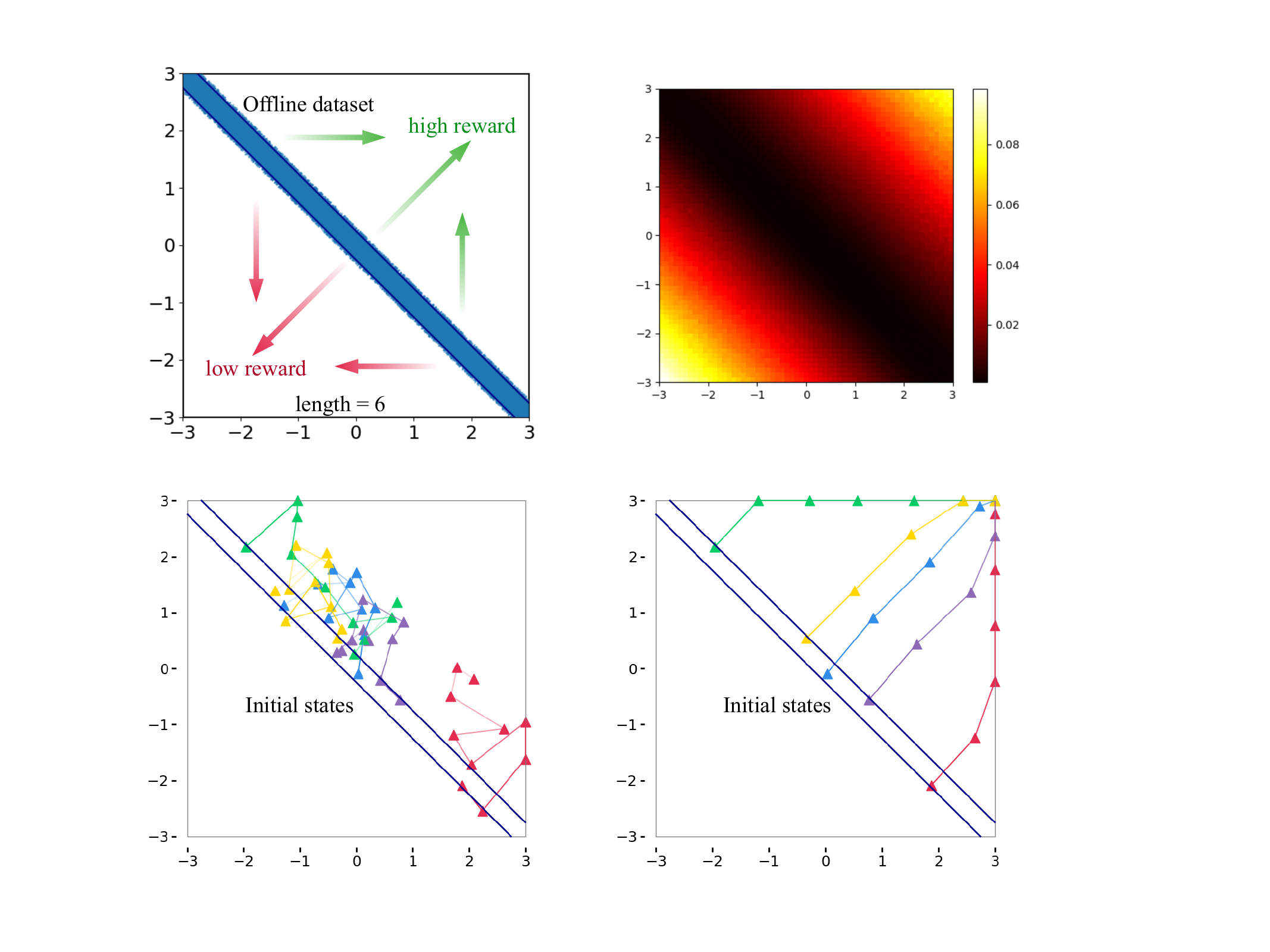}
		}
		\caption{
			(a) In toy experiments with a 2-dimensional continuous state space and action space, the coordinate origin (0, 0) is taken as the central point of the square region. 
			The agent starts at the region between lines $y=-x-0.25$ and $y=-x+0.25$, and the goal is to move upper right to obtain high rewards.
			The offline dataset only contains transitions whose state is in the initial area.
			(b) The further the states are from the offline dataset, the higher the estimated uncertainty value by the dynamics model.
			(c) The policy trained with MOPO~\cite{MOPO} with only P-MDP can not reach regions with high reward but high uncertainty.
			(d) With more optimistic model rollouts but optimization in the same P-MDP, ORPO agents can learn to reach states with high rewards and avoid regions with low rewards.
			Please refer to Appendix~\ref{exp-toy} for the detailed experimental setup.
		}\label{fig:toy}
\end{figure*}

This paper delves into the efficacy of optimism in the context of model-based offline RL, aiming to take full advantage of the learned dynamics model.
To introduce optimism when generating model rollouts, we can flip the sign of uncertainty penalties of the Pessimistic MDP (P-MDP), resulting in an Optimistic MDP (O-MDP). 
However, using only O-MDP in model-based offline RL contrasts the provably efficient pessimism~\cite{pevi-2021} in offline RL, and may mislead policies to risky state-action regions with large dynamics model errors.
Hence, the central question that this work is trying to answer is: can we train an offline policy that exploits the generalization ability of dynamics models while still adopting provably efficient pessimism?

To this end, we present a novel model-based offline RL algorithmic framework called ORPO (Figure~\ref{fig-ORPO}), which decouples the training of rollout policies from the pessimistic policy optimization.
Specifically, we construct an O-MDP to train optimistic rollout policies, which have a higher probability of accessing OOD state-action regions based on the generalization ability of dynamics models.
Subsequently, we relabel the optimistic model rollouts by assigning them penalized rewards in the P-MDP.
The agent trained with the relabeled optimistic rollouts is more likely to select OOD actions when their value estimations are high, while avoiding low-value risky regions, as shown in Figure~\ref{fig-toy-orpo}. 
In summary, our main contributions are:

\begin{itemize}[leftmargin=*]
	\setlength\itemsep{0em}
	\item We introduce the construction of an O-MDP in the model-based offline RL framework, highlighting its potential benefits derived from encouraging increased OOD sampling.
	\item  We present ORPO, a novel framework that generates optimistic model rollouts for pessimistic offline policy optimization. We theoretically provide the lower bound of the expected return of policies trained with ORPO.
	\item Through empirical evaluations, ORPO policies outperform the P-MDP baseline by a substantial margin of 30\%, and achieve competitive or superior scores compared to baseline methods in 8 out of 12 datasets from the D4RL benchmark~\cite{fu2020d4rl}. Furthermore, our method has better performance compared to the state-of-the-art in two datasets requiring policy to generalize.
\end{itemize}

	\section{Related Works}
Off-policy online RL algorithms ~\cite{TD3, SAC} often suffer from inefficiency due to extrapolation errors~\cite{BCQ}. These errors arise from overestimating the values of out-of-distribution (OOD) state-action pairs beyond the support of offline datasets.
%
Offline RL is proposed to learn effective policies from a logged dataset without interacting with the environment~\cite{levine2020offline}, which can generally be categorized into two types: model-free and model-based.
Model-free offline RL methods learn conservative value functions~\cite{CQL, IQL} or directly constrain the policy~\cite{BCQ, BEAR,TD3BC} to preclude OOD actions.
However, policy trained by such methods may be overly conservative~\cite{lee2022offline}, lacking generalization ability beyond the offline dataset~\cite{ROMI}.


\subsubsection{Model-based Offline RL. }
Model-based offline RL algorithms first train a dynamics model using supervised learning with the logged dataset.
Then the dynamics model can be used to optimize policies, in which Dyna-style algorithm~\cite{sutton1990integrated} is adopted by a number of recent methods~\cite{MOPO,COMBO,clavera2019model,rafailov2021offline}.
By utilizing the additional synthetic data generated by the learned dynamics model, model-based offline RL methods have the potential to exhibit better generalization abilities compared to model-free \cite{CQL, IQL, TD3BC}.
%
%
%
%

Since the limitation of the logged dataset, it is essential to quantify how trustable the model is for specific rollouts.
Both MOPO \cite{MOPO} and MOReL \cite{MOReL} construct the P-MDP to optimize the policy, where rewards are penalized according to uncertainty quantification.
Many recent works aim to incorporate pessimism into policy optimization, via backward dynamics model \cite{ROMI, CABI}, uncertainty-free conservatism  \cite{COMBO} or robust MDPs \cite{guo2022model, RAMBO}.
In contrast, we investigate the potential benefits of optimism for training rollout policies. We adopt the P-MDP from MOPO for pessimistic policy optimization and introduce the O-MDP for generating model rollouts. Importantly, our proposed framework is not limited to MOPO and can be easily combined with other model-based offline RL methods.

\subsubsection{Uncertainty Aware Reinforcement Learning.}
Uncertainty plays a crucial role in RL.
Optimism in the Face of Uncertainty (OFU)~\cite{bandit-2011} principle is commonly employed in online RL for active and efficient environment exploration~\cite{lockwood2022review}, which is provably efficient \cite{bandit-2011, lsvi-2020}.
Uncertainty is also widely used in model-based online RL for controlling the model usage~\cite{SLBO, MBPO, pan2020trust}. 

In offline RL, uncertainty is typically utilized for pessimism.
As aforementioned, certain model-based offline RL methods~\cite{lu2022revisiting} estimate the uncertainty of the dynamics model to construct P-MDPs. Additionally, recent model-free methods~\cite{EDAC,PBRL,UWAC} employ the uncertainty quantification of Q-functions to penalize OOD state-action pairs.
Our proposed framework is closely related to both the provably efficient designs for exploration in online RL and pessimism in offline RL.


\section{Preliminaries}
We define a Markov Decision Process (MDP) as the tuple $\tM = (\mathcal{S}, \mathcal{A}, \tT, r, \gamma)$, where $\mathcal{S}$ and $\mathcal{A}$ denote the state and action space, $ \tT(s'|s, a)$ represents the dynamics or transition distribution, $r(s, a)$ is the reward function, and $\gamma \in (0, 1)$ is the discount factor.
Let $\mathbb{P}^\pi_{\tT,t}(s)$ denote the probability of being in state $s$ at time step $t$ if actions are sampled according to $\pi$ and transitions according to $\tT$.
Let $\rho^\pi_{ \tT}(s,a)$ be the discounted occupancy measure of policy $\pi$ under dynamics $\tT$:
$\rho^\pi_{ \tT}(s,a) := \sum_{t=0}^\infty \gamma^t \mathbb{P}^\pi_{ \tT,t}(s) \pi(a | s) $. 
The goal is to find a policy $\pi(a|s)$ that maximizes the expected discounted return  $\eta_{\tM}(\pi) = \Eisub{(s,a)\sim \rho^\pi_{\tT}}[r(s,a)]$. 
The value function $V_{\tM}(s) := \Esub{\pi, \tT}\left[\sum_{t=0}^\infty \gamma^t r(s_t, a_t) | s_0=s\right]$ gives the expected discounted return when starting from state $s$.


For offline RL where agents can not interact with the environment, we have a previously-collected static dataset $\D = \{(s_{j}, a_{j}, r_{j}, s_{j+1})\}_{j=1}^{J}$, which consists of $J$ transition tuples from trajectories collected by a behavior policy.
Canonical model-based offline RL methods typically train an ensemble of $N$ probabilistic networks as the dynamics model $\widehat T = \{\widehat{T}_i(\hat s' |s,a)  = \mathcal{N}(\mu_i (s,a) , \Sigma_i (s,a) )\}_{i=1}^N$ 
to predict the next state $s'$ from a state-action pair.
Following previous works~\cite{MOPO, MOReL}, we assume the reward function r is known. If $r(\cdot)$ is unknown, it can also be learned from data.
The learned dynamics model $\hatT$ define a model MDP $\hatM = (\mathcal{S}, \mathcal{A}, \hatT, r, \gamma)$.
Then the goal switches to find a policy $\pi(a|s)$ that maximizes the expected discounted return with respect to $\rho^\pi_{ \hatT}$, as in $\eta_{\hatM}(\pi) = \Eisub{(s,a)\sim \rho^\pi_{\hatT}}[r(s,a)]$.

%
%

Model-based offline RL methods often construct a P-MDP for pessimistic offline policy optimization.
Notably, based on the model error between the true and learned dynamics, 
\begin{equation}
	\label{equation:G}
	G_{\hatM}(s,a) := \Esub{s' \sim \hatT(s,a)}[V_M(s')] - \Esub{s' \sim \tT(s,a)}[V_M(s')],
\end{equation}
MOPO assumes that there is an admissible model uncertainty $u(s,a)$ that can upper-bound the model error $|G^\pi_{\hatM}(s,a)|$:
\begin{equation}
	\label{equ:condition}
	u(s,a)	\ge |G^\pi_{\hatM}(s,a)|,  \forall s \in \mathcal{S}, a \in \mathcal{A}.
\end{equation}
Penalized by the estimator, pessimistic reward $r^p(s,a) =  r(s,a) - \lambda^p u(s,a)$ can be used to construct P-MDP as $\pM = (\mathcal{S}, \mathcal{A}, \hatT,  r^p, \gamma)$, where $\lambda^p:=\gamma c$ denotes the degree of pessimism and $c$ is a constant.
Define the average model uncertainty $\epsilon_u(\pi)$ as:
\begin{align}
	\epsilon_u(\pi) := \Eisub{(s,a)\sim \rho^\pi_{\hatT}}u(s,a)\label{eqn:11}.
\end{align}
Then the lower bound of performance in the real MDP can be established by the P-MDP~\cite{MOPO} as:
\begin{align}
	\label{equ:lower-bound}
	\eta_{\tM}(\pi) = 
	& \Eisub{(s,a) \sim \rho^\pi_{\hatT}}\left[ r(s,a) - \gamma |G^\pi_{\hatM}(s,a)|\right] \nonumber 		\\ \ge &\Eisub{(s,a) \sim \rho^\pi_{\hatT}}\left[ r(s,a) - \lambda^p u(s,a) \right] \nonumber \\ = &\eta_{\hatM}(\pi)-\lambda^p \epsilon_u(\pi) = \eta_{\pM}(\pi).
\end{align}
According to Equation \ref{equ:lower-bound}, we can optimize policies in the real MDP by improving their performance in the P-MDP.


\section{Proposed Method}
In this section, we present the construction of an O-MDP and discuss its potential benefits in Section~\ref{sec:O-MDP}.
Next, we provide an overview of the ORPO framework in Section~\ref{sec:framework}.
To establish a solid theoretical foundation for ORPO, we delve into the theoretical analysis in Section~\ref{sec:theoretical}.

\subsection{Optimistic MDP Construction}
\label{sec:O-MDP}
To introduce optimism when generating model rollouts, we flip the sign of the uncertainty penalty in the Pessimistic MDP to construct an Optimistic MDP (O-MDP) $\oM = (\mathcal{S}, \mathcal{A}, \hatT, r^o, \gamma)$, where $r^o(s,a) =   r(s,a) + \lambda^o u(s,a) $.
Subsequently, we train the rollout policy within the O-MDP, optimizing the following objective (Equation~\ref{equ:objective}), where $\lambda^o$ serves as a coefficient to regulate the level of optimism.
\begin{equation}
	\label{equ:objective}
	\Opi = \argmax_{\pi} \Eisub{(s,a) \sim \rho^\pi_{\hatT}}[   r(s,a) + \lambda^o u(s,a)].
\end{equation}
To analyze the impact of the O-MDP, we begin by comparing the model rollouts generated under the P-MDP and O-MDP settings. 
We measure the distance between the rollout actions and the offline dataset actions using the $\ell_2$ norm, calculated as $\Esub{(s,a)\sim \D} [\Vert\pi(\cdot|s)-a\Vert_2]$, where $\pi$ represents the rollout policies.
As shown in Figure~\ref{fig-distance-OMDP}, we observe that optimistic rollouts exhibit larger distances from the offline dataset compared to pessimistic rollouts, indicating that optimistic rollouts involve more OOD sampling.
While prior works~\cite{CQL, PBRL} typically sample OOD actions using random policies, we contend that OOD model rollouts generated within the O-MDP framework possess greater value. 
On the one hand, optimistic rollout policies guided by the objective in Equation~\ref{equ:objective} selectively sample actions with high uncertainty and high estimated values, as opposed to random policies. This targeted sampling strategy can lead to more informative OOD actions.
On the other hand, dynamics models with generalization capacity can generate better characterization for OOD model rollouts.

\begin{figure}[h]
	\centering
	\includegraphics[width=0.48\textwidth]{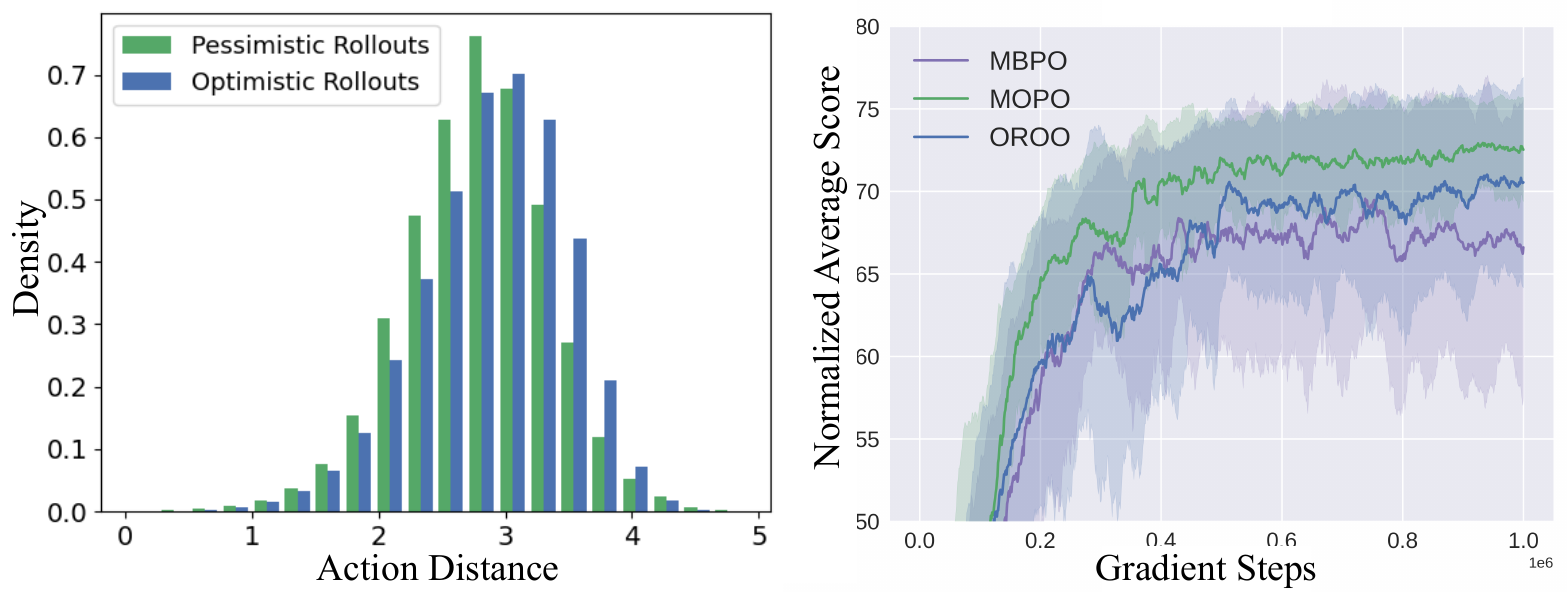}
	\caption{\small A case study of methods using model MDP (MBPO), P-MDP (MOPO), and O-MDP (OROO) on ``Halfcheetah-medium-v2'' datasets over 5 different seeds. \textbf{Left: }Histograms of distances between actions from different model rollouts and the offline dataset.
		\textbf{Right: }Learning curves of different methods.}
	\label{fig-distance-OMDP}
\end{figure}

%
%

To showcase the effectiveness of OOD model rollouts, we introduce a simple baseline called Optimistic model Rollouts for Optimistic policy Optimization (OROO), which is derived from MOPO by replacing the P-MDP with the O-MDP.
We make the intriguing observation that OROO utilizing the O-MDP, surpasses the performance of the model MDP baseline, MBPO~\cite{MBPO}.
Interestingly, our experiments in Section~\ref{exp:ablation} reveal that OROO can even outperform MOPO on certain offline RL datasets. This finding highlights the additional benefits brought by O-MDP and the potential of optimism in model-based offline RL.

However, it is important to acknowledge that optimizing policies in the O-MDP can result in a larger model error $\epsilon_u(\pi^o)$, which in turn reduces the performance lower bound as depicted in Equation~\ref{equ:lower-bound}.
This is consistent with the previous conclusion that pessimism is provably efficient in the offline setting~\cite{pevi-2021}.
In fact, the choice between P-MDP and O-MDP involves a trade-off between the generalization capacity of the dynamics model and the introduced model error.
Building upon the aforementioned analysis, our objective is to train optimistic rollout policies that encourage more OOD sampling while still utilizing the P-MDP to control the model error within an admissible range.

\subsection{Algorithmic Framework}
\label{sec:framework}

We now present our framework, ORPO, which is designed to generate optimistic model rollouts for offline policy optimization in a pessimistic manner.
In ORPO, we decouple the training of the rollout policy from the pessimistic optimization of the output policy $\Ppi$. Instead, we focus on learning a more optimistic rollout policy denoted as $\Opi$, which is optimized under the O-MDP constructed in Section~\ref{sec:O-MDP}.

\begin{algorithm}
	\caption{Framework for Optimistic Rollout for Pessimistic Policy Optimization (ORPO)}
	\label{broad-alg}
	\begin{algorithmic}[1]
		\STATE {\bfseries Require:} Offline dataset $\D$, initialized rollout policy $\Opi$ and output policy $\Ppi$.
		\STATE Train the dynamics model $\widehat{T}$ with uncertainty quantifier $u(s,a)$.
		\STATE Initialize the replay buffers $\ooD \leftarrow \varnothing,\poD \leftarrow \varnothing,\ppD \leftarrow \varnothing$.
		\FOR{epoch $1, 2, \dots$}
		\STATE Run any online RL algorithm in $\oM$ to optimize rollout policy $\Opi$, and add the rollouts in replay buffer to $\ooD$.
		\STATE Relabel $\ooD$ with penalized rewards according to P-MDP, obtaining $\poD$.
		\STATE Collect model rollouts by sampling from $\Ppi$ in $\pM$ starting from states in $\D$, and add the rollouts to $\ppD$.
		\STATE 
		Run any offline RL algorithm on $\D \cup \poD \cup \ppD$ to optimize policy $\Ppi$.
		\ENDFOR
		\STATE {\bfseries Return:} Optimized output policy $\Ppi$.
	\end{algorithmic}
\end{algorithm}

The optimistic rollout policy $\Opi$ is capable of interacting with the dynamics model, allowing us to optimize it using online RL algorithms.
During the training of $\Opi$, we collect and store the optimistic rollouts $(s, \Opi(a|s), r^o, \hat s')$ in a buffer denoted as $\ooD$.
Then we directly relabel the optimistic rollouts using the penalized reward according to the P-MDP. This relabeling process transforms the rollouts into $(s, \Opi(a|s), r^p, \hat s')$.
We then store these relabeled optimistic rollouts into another buffer $\poD$ to be used for pessimistic policy optimization.
Besides, we also store pessimistic rollouts $(s, \Ppi(a|s), r^p, \hat s')$ which are sampled by the output policy in P-MDP, denoted as $\ppD$.
Note that previous model-based offline RL methods \cite{lu2022revisiting} typically utilize $\ppD$ and $\D$ for pessimistic policy optimization.
In our framework, with the inclusion of the rollout policy $\Opi$, we can leverage the additional dataset $\poD$ to introduce more OOD state-action pairs.
Given the datasets $\ppD$, $\poD$, and the offline dataset $\D$, our objective is to derive a policy $\Ppi$ that maximizes the expected discounted return in the real MDP, i.e.,
\begin{equation}
	\Ppi = \argmax_{\pi} [ \eta_{\tM}( \pi) ].
\end{equation}
The behavior policy used to collect our synthetic dataset, which includes $\poD$, $\ppD$, and $\D$, differs significantly from the desired output policy $\Ppi$.
Therefore, we employ offline RL algorithms for pessimistic optimization.
The training of the rollout policy and pessimistic policy optimization is conducted iteratively in an alternating fashion.
The overall framework of ORPO is outlined in Algorithm \ref{broad-alg}.
For practical implementation details, please refer to Appendix \ref{appendix:practical}.



\begin{table*}[t]\centering
	\addtolength{\tabcolsep}{-3pt}
	\renewcommand\arraystretch{0.8}
	\small
	\centering
		\begin{tabular}{c@{\hspace{3pt}}l@{\hspace{-0.5pt}}r@{\hspace{-1pt}}lr@{\hspace{-0.5pt}}lr@{\hspace{-0.5pt}}lr@{\hspace{-0.5pt}}lr@{\hspace{-0.5pt}}lr@{\hspace{-0.5pt}}lr@{\hspace{-0.5pt}}lr@{\hspace{-0.5pt}}lr@{\hspace{-0.5pt}}lr@{\hspace{-0.5pt}}lr}
			\toprule
			\multicolumn{1}{c}{}     &                               & \multicolumn{8}{c|}{Model-free methods}  & \multicolumn{8}{c}{Model-based methods}  & \\
			\midrule
			\multicolumn{1}{l}{}     &                                      & \multicolumn{2}{c}{\makecell[c]{2020 \\ NeurIPS \\ CQL }}  & \multicolumn{2}{c}{\makecell[c]{2021 \\ NeurIPS \\ TD3+BC }}   &
			\multicolumn{2}{c}{\makecell[c]{2022 \\ ICLR \\ IQL }}   & \multicolumn{2}{c}{\makecell[c]{2022 \\ ICLR \\ PBRL }}& \multicolumn{2}{c}{\makecell[c]{2020 \\ NeurIPS \\ MOPO }}   & \multicolumn{2}{c}{\makecell[c]{2020 \\ NeurIPS \\ MOReL }}   & \multicolumn{2}{c}{\makecell[c]{2021 \\ NeurIPS \\ COMBO}}     &
			\multicolumn{2}{c}{\makecell[c]{2022 \\ NeurIPS \\ RAMBO}}     &
			\multicolumn{2}{c}{\makecell[c]{2022 \\ NeurIPS \\ CABI }}&
			\multicolumn{2}{c}{\makecell[c]{(Ours)\\ORPO }}\\
			\midrule
			\multirow{3}{*}{\rotatebox[origin=c]{90}{ Random}} & HalfCheetah \ \  \ \    & {27.0} &{$\pm$0.6}& {11.3} & {$\pm$0.5}  &{7.8} &{$\pm$0.3}&\multicolumn{2}{c}{ {11.0}} & {20.7}  & {$\pm$1.8}   &\multicolumn{2}{c}{25.6}&\multicolumn{2}{c}{ {38.8}}&\multicolumn{2}{c}{ 40.0} &\multicolumn{2}{c}{ {15.1} }&\textbf{40.8} & {$\pm$1.6} \\
			\specialrule{0em}{1.5pt}{1.5pt}
			
			& Hopper          & \cellcolor{white}{16.2} &$\pm$2.5  & \cellcolor{white}{12.7} & $\pm$3.9  &{8.5} &{$\pm$0.0}& \multicolumn{2}{c}{\cellcolor{white}{26.8}}&{31.7}  &  $\pm$0.3 &\multicolumn{2}{c}{\textbf {53.6}}& \multicolumn{2}{c}{\cellcolor{white}{17.8}  }&\multicolumn{2}{c}{21.6}  &\multicolumn{2}{c}{ \cellcolor{white}{11.9} } & \cellcolor{white}{9.2} &$\pm1.4$  \\
			\specialrule{0em}{1.5pt}{1.5pt}
			
			& Walker2d        & \cellcolor{white}{1.2} & $ \pm$0.5 & \cellcolor{white}{2.1} &  $\pm$1.2   &{5.6} &{$\pm$0.0}&\multicolumn{2}{c}{ \cellcolor{white}{8.1} }& \cellcolor{white}{1.7}  & $\pm$0.5  &\multicolumn{2}{c}{\textbf {37.3}}&\multicolumn{2}{c}{ \cellcolor{white}{7.0}  } &\multicolumn{2}{c}{{11.5}} & \multicolumn{2}{c}{\cellcolor{white}{6.4} }  &{10.8} &  $\pm$9.3 \\
			\midrule
			\multirow{3}{*}{\rotatebox[origin=c]{90}{Medium}} & HalfCheetah      & \cellcolor{white}{52.6} & $\pm$0.3 & \cellcolor{white}{48.4} &$\pm$0.3    &{47.7} &{$\pm$0.3}& \multicolumn{2}{c}{\cellcolor{white}{57.9}}&
			\cellcolor{white}{71.1}  & $\pm$2.6  &\multicolumn{2}{c}{42.1}  &\multicolumn{2}{c}{ \cellcolor{white}{54.2} } &\multicolumn{2}{c}{\textbf{77.6}} &\multicolumn{2}{c}{ \cellcolor{white}{45.1}}  &\textbf{73.4} & $\pm$0.5 \\
			\specialrule{0em}{1.5pt}{1.5pt}
			
			& Hopper       & \cellcolor{white}{78.9} &$\pm$6.4  & \cellcolor{white}{56.4} & $\pm$4.9   &{54.3} &{$\pm$4.3}& \multicolumn{2}{c}{\cellcolor{white}{75.3}} &
			\cellcolor{white}{20.7}  &   $\pm$12.9  &\multicolumn{2}{c}{95.4} &\multicolumn{2}{c}{ \cellcolor{white}{94.9} }&\multicolumn{2}{c}{92.8}&\multicolumn{2}{c}{\textbf{100.4}}  & \cellcolor{white}{30.4} &   $\pm$37.4  \\
			\specialrule{0em}{1.5pt}{1.5pt}
			
			& Walker2d        & \cellcolor{white}{82.2} &   $\pm$2.6  & \cellcolor{white}{80.8} &   $\pm$2.9  &{76.1} &{$\pm$5.1} & \multicolumn{2}{c}{\textbf{89.6}}&
			\cellcolor{white}{16.8}  &   $\pm$15.0 &\multicolumn{2}{c}{17.8}& \multicolumn{2}{c}{\cellcolor{white}{77.8}  }&\multicolumn{2}{c}{86.9}&\multicolumn{2}{c}{ \cellcolor{white}{82.0}}  & \cellcolor{white}{55.5} &   $\pm$23.4  \\
			
			\midrule
			\multirow{3}{*}{\rotatebox[origin=c]{90}{\shortstack{Medium\\Replay}}} & HalfCheetah      & \cellcolor{white}{49.5} &   $\pm$0.5  & \cellcolor{white}{44.2} &   $\pm$0.5 &{44.5} &{$\pm$0.5}& \multicolumn{2}{c}{\cellcolor{white}{45.1}}&
			\cellcolor{white}{62.5}  &   $\pm$10.4     &\multicolumn{2}{c}{40.2}& \multicolumn{2}{c}{\cellcolor{white}{55.1} } &\multicolumn{2}{c}{68.9}&\multicolumn{2}{c}{ \cellcolor{white}{44.4}}&
			\textbf{72.8} &   $\pm$0.9  \\
			\specialrule{0em}{1.5pt}{1.5pt}
			
			& Hopper       & \cellcolor{white}{99.2} &   $\pm$1.6  & \cellcolor{white}{56.3} &   $\pm$20.8    &{78.1} &{$\pm$5.3}& \multicolumn{2}{c}{\cellcolor{white}{88.8} } &
			\cellcolor{white}{100.8}  &   $\pm$4.9 &\multicolumn{2}{c}{93.6}  & \multicolumn{2}{c}{\cellcolor{white}{73.1}} &\multicolumn{2}{c}{96.6}&\multicolumn{2}{c}{ \cellcolor{white}{31.3}}  & \textbf{104.6} &   $\pm$1.5 \\
			\specialrule{0em}{1.5pt}{1.5pt}
			
			& Walker2d       & \cellcolor{white}{80.7} &   $\pm$10.7 & \cellcolor{white}{75.7} &   $\pm$7.6   &{68.6} &{$\pm$9.9}& \multicolumn{2}{c}{\cellcolor{white}{77.7}}&
			\cellcolor{white}{80.0}  &   $\pm$8.9   &\multicolumn{2}{c}{49.8} &\multicolumn{2}{c}{ \cellcolor{white}{56.0}} &\multicolumn{2}{c}{85.0}&\multicolumn{2}{c}{ \cellcolor{white}{29.4}} &\textbf{91.1} &   $\pm$2.0\\
			
			\midrule
			\multirow{3}{*}{\rotatebox[origin=c]{90}{\shortstack{Medium\\Expert}}} & HalfCheetah      & \cellcolor{white}{64.2} &   $\pm$11.5   & \cellcolor{white}{86.0} &   $\pm$6.7 &{81.2} &{$\pm$6.0} &\multicolumn{2}{c}{ \cellcolor{white}{92.3}}&
			\cellcolor{white}{80.8}  &   $\pm$11.4   &\multicolumn{2}{c}{53.3} &\multicolumn{2}{c}{ \cellcolor{white}{90.0}}  &\multicolumn{2}{c}{93.7}&\multicolumn{2}{c}{\textbf{105.0}}   & \cellcolor{white}{\textbf{101.5}} &   $\pm$3.1\\
			\specialrule{0em}{1.5pt}{1.5pt}
			
			& Hopper          & \cellcolor{white}{68.2} &   $\pm$25.1 & \cellcolor{white}{100.0} &   $\pm$9.8    &{5.1} &{$\pm$1.6}&\multicolumn{2}{c}{ \cellcolor{white}{110.8} }&
			\cellcolor{white}{21.1}  &   $\pm$20.0   &\multicolumn{2}{c}{108.7} & \multicolumn{2}{c}{\cellcolor{white}{111.1}  }  &\multicolumn{2}{c}{83.3}&\multicolumn{2}{c}{\textbf{112.7} }&\textbf{111.0} &   $\pm0.6$ \\
			\specialrule{0em}{1.5pt}{1.5pt}
			
			& Walker2d      & \cellcolor{white}{109.6} &   $\pm$0.3 &\textbf{110.3} &   $\pm$0.5  &{107.8} &{$\pm$4.0} & \multicolumn{2}{c}{\cellcolor{white}{110.1}}&
			\cellcolor{white}{102.1}  &   $\pm$8.5  &\multicolumn{2}{c}{95.6} & \multicolumn{2}{c}{\cellcolor{white}{96.1}  }  &\multicolumn{2}{c}{68.3}&\multicolumn{2}{c}{ \cellcolor{white}{108.4}}  &\textbf{108.8} &   $\pm$3.2\\
			
			\bottomrule
		\end{tabular}
	\caption{Average normalized score and the standard deviation with the `v2' dataset of D4RL. The highest-performing and competitive scores of our method are highlighted. 
		We run CQL, TD3+BC, IQL, MOPO, and ORPO over 5 different seeds and take the average scores.
		The scores of PBRL, MOReL, COMBO, and CABI are taken from their papers.
	} \label{d4rl-part}
\end{table*}

\subsection{Theoretical Analysis}
\label{sec:theoretical}
Denote the optimal policy in the P-MDP as $\hat \pi^p$.
MOPO has demonstrated that the expected return of $\hat \pi^p$ in the real MDP, denoted as $\eta_{M}(\hat \pi^p)$, has a lower bound.
However, how to train optimal policies in the P-MDP has not been thoroughly investigated.
To bridge this gap, we analyze the optimality of ORPO under the linear-MDPs assumption, which is widely adopted by previous theoretical works~\cite{melo2007q, lsvi-2020,pevi-2021}.

We initially learn a dynamics model and subsequently employ this model to conduct online RL for generating optimistic rollouts.
Based on this point, ORPO aligns closely with the theoretical investigations in online RL, which explore the environment through Upper Confidence Bound (UCB)~\cite{audibert2009exploration}.
From the theoretical perspective, appropriate uncertainty quantification is essential to the provable efficiency in our framework.
We utilize the standard deviation of the dynamics model ensembles for uncertainty quantification, i.e., $u(s,a):=\text{\rm Std}\bigl( \{\hatT_i(s, a)\}_{i=1}^{N}\bigr)$.
Then we can make the following proposition:
\begin{proposition}
	\label{pro:uncertainty}
	Under the assumption of linear MDPs, the uncertainty of dynamics models can form a UCB bonus.
\end{proposition}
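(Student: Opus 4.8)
The plan is to show that the ensemble standard deviation $u(s,a) = \mathrm{Std}\bigl(\{\hatT_i(s,a)\}_{i=1}^N\bigr)$ coincides, up to a tunable scaling constant, with the elliptical-norm bonus $\beta\,\Vert \phi(s,a)\Vert_{\Lambda^{-1}}$ that is known to be a valid UCB bonus in the provably efficient LSVI-UCB analysis. Here I write $\phi(s,a)\in\mathbb{R}^d$ for the feature map guaranteed by the linear-MDP assumption (so that both the reward and the expected value of any function under $\tT(\cdot|s,a)$ are linear in $\phi$), and $\Lambda := \lambda I + \sum_{j=1}^J \phi(s_j,a_j)\phi(s_j,a_j)^\top$ for the regularized empirical Gram matrix built from the offline dataset $\D$. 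Establishing this equivalence reduces the proposition to a statement purely about linear regression.

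First I would make the linear-MDP structure explicit: under the assumption, fitting each next-state predictor $\hatT_i$ is a (ridge-regularized) least-squares problem in the weights $w\in\mathbb{R}^d$, with design matrix given by the stacked features $\phi(s_j,a_j)$. Adopting the Bayesian linear-regression / randomized-prior viewpoint standard in ensemble sampling, the distribution over fitted weights that the ensemble approximates is Gaussian with covariance $\sigma^2 \Lambda^{-1}$. Consequently the predictive spread at a query point is
\begin{equation}
  \mathrm{Var}\bigl(\langle \phi(s,a), w\rangle\bigr) = \sigma^2\, \phi(s,a)^\top \Lambda^{-1} \phi(s,a),
\end{equation}
so that the population standard deviation equals $\sigma\,\Vert \phi(s,a)\Vert_{\Lambda^{-1}}$. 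The finite ensemble std $u(s,a)$ is the empirical counterpart and concentrates to this value as $N$ grows; this identifies $u$ with the elliptical-norm quantity up to the constant $\sigma$.

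Next I would verify the actual UCB (one-sided confidence) property rather than merely matching the functional form. Using a self-normalized concentration inequality of Abbasi-Yadkori--P\'al--Szepesv\'ari type, with high probability the estimation error of any linear functional is dominated by the elliptical norm, i.e. $|\langle \phi(s,a), \hat w - w^\star\rangle| \le \beta\,\Vert \phi(s,a)\Vert_{\Lambda^{-1}}$ for a suitable $\beta = \widetilde{O}(d)$. Combining this with the regression-variance identity above, choosing $\lambda^o$ (equivalently the scaling of $u$) at least as large as $\beta/\sigma$ guarantees that $\lambda^o u(s,a)$ upper-bounds the true one-step model error $|G^\pi_{\hatM}(s,a)|$ with high probability, which is exactly the optimism guarantee that makes it a legitimate UCB bonus. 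This closes the argument.

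The hard part will be bridging the idealized linear-Gaussian posterior and the actual finite ensemble of probabilistic networks: the identity $\mathrm{Var} = \sigma^2\,\phi^\top\Lambda^{-1}\phi$ is exact only in the conjugate linear model, so I would either (i) state it under the linear-MDP assumption with the ensemble members trained by ridge regression on bootstrap resamples / randomized priors, making the correspondence exact, or (ii) absorb the approximation gap into the constant $\beta$. The second subtlety is the joint calibration of $\beta$ and the ensemble size $N$ so that $\beta$ simultaneously dominates the regression error and the $O(1/\sqrt{N})$ empirical-versus-population std fluctuation; I expect this calibration, rather than any single inequality, to be where the real care is required.
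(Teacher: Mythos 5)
Your proposal takes essentially the same route as the paper: the core step in both is the Bayesian linear-regression identity $\mathrm{Var}\bigl(\phi(s,a)^\top \widehat{w}\bigr) = \phi(s,a)^\top \Lambda^{-1}\phi(s,a)$ for the ridge/LSVI posterior, which identifies the ensemble standard deviation with the elliptical norm $\bigl[\phi(s,a)^\top\Lambda^{-1}\phi(s,a)\bigr]^{1/2}$ known to be a valid UCB bonus. The only difference is one of packaging: you explicitly invoke the Abbasi-Yadkori--P\'al--Szepesv\'ari self-normalized concentration bound to certify the one-sided confidence property, whereas the paper asserts that property by citing the LSVI-UCB and pessimistic value-iteration literature (and formalizes it only later, in its $\xi$-uncertainty-quantifier lemma); your added caution about bridging the idealized Gaussian posterior and the finite neural-network ensemble is a real gap that the paper also leaves unaddressed.
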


We train an optimistic rollout policy for generating model rollouts in the O-MDP.
Since the P-MDP and O-MDP share the same transition distribution, from the view of P-MDP, the reward bonus for training optimistic rollout policy is $(\lambda^p+\lambda^o)  u(s,a)$, which can be a UCB bonus for an appropriately selected tuning $\lambda^o$ and $\lambda^p$.
Then we use the samples (model rollouts) to optimize the output policy $\pi^p$ in the P-MDP.

\begin{theorem}
	\label{the:ORPO}
	Under linear model MDPs and the same assumptions as MOPO, ORPO can find $ \epsilon$-optimal policy $\Ppi$ in the P-MDP, which satisfies
	\begin{equation}
		\eta_{\tM}(\pi^p) \ge \sup_{\pi}\{\eta_{\tM}(\pi) - 2\lambda\epsilon_u(\pi) - \epsilon \}.
	\end{equation}
\end{theorem}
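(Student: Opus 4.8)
The plan is to chain two estimates: a ``model-to-real'' translation coming from the MOPO telescoping identity that already underlies Equation~\ref{equ:lower-bound}, and an ``optimization'' guarantee asserting that the returned policy $\pi^p$ is $\epsilon$-optimal \emph{inside} the P-MDP. Fixing an arbitrary comparator $\pi$, I would establish the sandwich
\[
	\eta_{\tM}(\pi^p) \;\ge\; \eta_{\pM}(\pi^p) \;\ge\; \eta_{\pM}(\pi) - \epsilon \;\ge\; \eta_{\tM}(\pi) - 2\lambda\epsilon_u(\pi) - \epsilon,
\]
and then take the supremum over $\pi$ to recover the claimed bound with $\lambda := \lambda^p$.

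The two outer inequalities are pure bookkeeping on the MOPO bound. The leftmost, $\eta_{\tM}(\pi^p) \ge \eta_{\pM}(\pi^p)$, is exactly Equation~\ref{equ:lower-bound} instantiated at the output policy. The rightmost runs the same telescoping identity in the opposite direction: starting from $\eta_{\tM}(\pi) = \eta_{\hatM}(\pi) - \gamma\,\Eisub{(s,a)\sim\rho^\pi_{\hatT}}[G^\pi_{\hatM}(s,a)]$ and the admissibility condition~\ref{equ:condition}, I would bound $\gamma\,\E[G^\pi_{\hatM}] \le \gamma\,\E[|G^\pi_{\hatM}|] \le \lambda^p\epsilon_u(\pi)$ to get $\eta_{\hatM}(\pi) \ge \eta_{\tM}(\pi) - \lambda^p\epsilon_u(\pi)$; subtracting the pessimism penalty $\lambda^p\epsilon_u(\pi)$ in the definition $\eta_{\pM}(\pi) = \eta_{\hatM}(\pi) - \lambda^p\epsilon_u(\pi)$ then yields the factor-two term $2\lambda^p\epsilon_u(\pi)$.

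The middle inequality, $\eta_{\pM}(\pi^p) \ge \sup_\pi \eta_{\pM}(\pi) - \epsilon$, is the substantive step. The key algebraic observation is $\Or = \pr + (\lambda^p+\lambda^o)\,u$, so optimizing the rollout policy in the O-MDP is exactly optimizing the pessimistic reward $\pr$ augmented by the exploration bonus $(\lambda^p+\lambda^o)\,u$. By Proposition~\ref{pro:uncertainty}, under the linear-MDP assumption this bonus is, for suitably chosen $\lambda^o,\lambda^p$, a valid UCB bonus, so the overall procedure is an instance of optimism-in-the-face-of-uncertainty exploration (of the LSVI-UCB type) aimed at the P-MDP objective. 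I would invoke the standard regret analysis for UCB-based online RL in linear MDPs~\cite{lsvi-2020} to bound the cumulative suboptimality of the rollout iterates, and convert that online regret into a PAC-style guarantee for a single returned policy via a best-iterate / online-to-batch argument; the resulting $\epsilon$ scales with the feature dimension, effective horizon, and number of collected rollouts.

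The hardest part is making the exploration-to-optimization handoff rigorous, since the samples are generated by the optimistic rollout policy $\pi^o$ in the O-MDP whereas the quantity to be certified is the value of $\pi^p$ under the pessimistic reward $\pr$. I would argue that because the augmented reward is UCB-optimistic the estimated value upper-bounds the true P-MDP value at every visited state-action, so an elliptical-potential (information-gain) bound controls the rate at which the optimism contracts; relabeling the identical transitions with $\pr$ leaves the transition estimates and the covariance/coverage guarantees untouched, so they transfer directly to evaluating $\pi^p$ in the P-MDP. The delicate constraint binding the two halves together is to verify that $\lambda^o,\lambda^p$ can be chosen so the combined bonus simultaneously dominates the model-evaluation error required for the UCB property while still respecting the pessimism that Equation~\ref{equ:lower-bound} demands.
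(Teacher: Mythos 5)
Your proposal matches the paper's own proof essentially step for step: the same chain $\eta_{\tM}(\pi^p) \ge \eta_{\pM}(\pi^p) \ge \eta_{\pM}(\pi) - \epsilon \ge \eta_{\tM}(\pi) - 2\lambda\epsilon_u(\pi) - \epsilon$, the same two-sided use of the MOPO model-error bound to produce the factor of two, and the same reduction of the middle inequality to an LSVI-UCB regret-to-PAC argument in the linearized P-MDP with bonus $(\lambda^o+\lambda^p)u(s,a)$. The only cosmetic difference is that the paper routes the middle step explicitly through the P-MDP optimum $\hat\pi^p$ and notes the resulting guarantee holds only with constant probability, which your online-to-batch remark implicitly covers.
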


Theorem \ref{the:ORPO} shows that the performance of ORPO in the real MDP can be guaranteed.
Note that we omit the sample complexity because within our framework, samples to optimize the policy can be generated by the learned dynamics model instead of the real environment, which is much cheaper and easier.
We refer to Appendix \ref{appendix-Theoretical} for details.

\begin{table*}[h]
	\centering
		\resizebox{0.86\linewidth}{!}{
		\begin{tabular}{l c c c c c}
			\toprule 
			Environments& CQL & TD3+BC &MOPO & COMBO&ORPO \\ \midrule
			Halfcheetah-jump & 1287.8$\pm$40.4& -4733.3$\pm$746.7& 4411.8$\pm$642.9 & 4595.2$\pm$405.6 &\textbf {5218.0}$\pm$128.5\\
			Halfcheetah-jump-hard & -2989.8$\pm$2.0&-2484.4$\pm$383.3 & -1881.8$\pm$1342.2 & 2782.8$\pm$206.7 &\textbf {4867.9}$\pm$381.6 \\
			\bottomrule
		\end{tabular}
			}
	\caption{
		Average returns over 5 random seeds on tasks that require OOD policy. }
	\label{fig:generalize}
	\normalsize
\end{table*}


\section{Experiments}
In our experiment, we aim to investigate three primary research questions
(RQs): 

\textbf{RQ1 (Performance): }
How does ORPO perform on standard offline RL benchmarks and tasks requiring generalization compared to state-of-the-art baselines?

\textbf{RQ2 (Effectiveness of optimistic rollout policy): }
How does the proposed optimistic rollout policy compare to various other rollout policies?

\textbf{RQ3 (Ablation study):}
How does each design in ORPO affect performance?

To answer the above questions, we conducted our experiments on the D4RL benchmark suite~\cite{fu2020d4rl} as well as two datasets that require generalization to related but previously unseen tasks using the MuJoCo simulator~\cite{todorov2012mujoco}. For the practical implementation of the ORPO algorithm, we utilized the SAC~\cite{SAC} to train the optimistic rollout policy, and for pessimistic offline policy optimization, we used TD3+BC~\cite{TD3BC}. Most of the hyper-parameters were inherited from the optimized MOPO~\cite{lu2022revisiting}. 

\subsection{Performance (RQ1)}
\label{sec:performance}

To answer RQ1, we compared ORPO with several state-of-the-art algorithms, including:
1) CQL~\cite{CQL}: A conservative Q-learning algorithm that minimizes Q-values of OOD actions.
2) TD3+BC~\cite{TD3BC}: A model-free algorithm that incorporates an adaptive behavior cloning (BC) constraint to regularize the policy.
3)  IQL~\cite{IQL}, an implicit conservative Q-learning algorithm to avoid using Q-values of OOD actions.
4) PBRL~\cite{PBRL}: An uncertainty-based algorithm that uses OOD sampling.
5) MOPO~\cite{MOPO}: A model-based algorithm that penalizes rewards based on uncertainty.
6) COMBO~\cite{COMBO}: A model-based variant of CQL.
7) RAMBO~\cite{RAMBO}: A model-based algorithm using robust adversarial RL.
8) CABI~\cite{CABI}: An algorithm that utilizes forward and backward CVAE rollout policies to generate trustworthy rollouts. 

\subsubsection{Results on D4RL benchmarks:}
We summarized the average normalized scores in Table~\ref{d4rl-part}, which includes three environments (HalfCheetah, Hopper, and Walker2d), each with four datasets.
Our ORPO achieved competitive or better results compared to state-of-the-art methods in 8 out of 12 datasets. Overall, ORPO demonstrated significant advantages, particularly when the offline datasets were more diverse, such as in the ``random'' and ``medium-replay'' types. This can be attributed to the improved generalization abilities of the dynamics models trained on such datasets.

We observed that implementing ORPO based on MOPO resulted in a significant performance boost in 11 out of the 12 datasets, increasing the total average normalized score from 610.0 to 809.9, with an improvement of more than 30\%.
The only exception is the ``hopper-random-v2'' datasets. 
This may be because the ``halfcheetah'' and ``walker2d'' tasks are more resilient to OOD actions, while the hopper tasks are more prone to terminating the episode when encountering OOD actions,~\footnote{https://www.gymlibrary.dev/environments/mujoco/}
 making most OOD model rollouts useless.


\subsubsection{Results on tasks requiring generalization:}
To further demonstrate the generalization ability of the output policy, we evaluated on ``Halfcheetah-jump'' dataset proposed by Yu et al.~\cite{MOPO}. This dataset was collected by storing the entire training replay buffer from training SAC for 1 million steps in the HalfCheetah task. The state-action pairs in the dataset were then assigned new rewards that incentivized the halfcheetah to jump.
Based on the ``Halfcheetah-jump'' dataset, we constructed a more challenging dataset, ``Halfcheetah-jump-hard''. This dataset consists of trajectories sampled by a random policy, and the assigned new rewards are further penalized if the halfcheetah is unhealthy. 


We observe that model-based methods show great advantages over model-free.
Notably, ORPO outperforms all the baseline methods by a large margin, highlighting its effectiveness in terms of generalization ability.
In the ``Halfcheetah-jump-hard'' dataset, due to the additional unhealthy penalization on rewards, the policy trained by MOPO is too conservative to run.
ORPO is the only method that achieves satisfactory performance, which suggests that our method can not only generalize to OOD regions but also preclude some of them with low values.

\subsection{Effectiveness of optimistic rollout policy}
\label{sec:rollout}
To answer RQ2, we compare rollout policies in our framework with various rollout policies including:
1) Random rollout policy, which generates actions from the uniform distribution in the action space.
2) Conditional variational autoencoder (CVAE) rollout policy~\cite{CABI}, which offers diverse actions while staying within the span of the dataset.
3) Trained optimistic rollout policy, which uses well-trained fixed optimistic rollout policy to generate optimistic rollouts for pessimistic policy optimization.


\begin{figure}[h]
	\centering
	\includegraphics[width=0.45\textwidth]{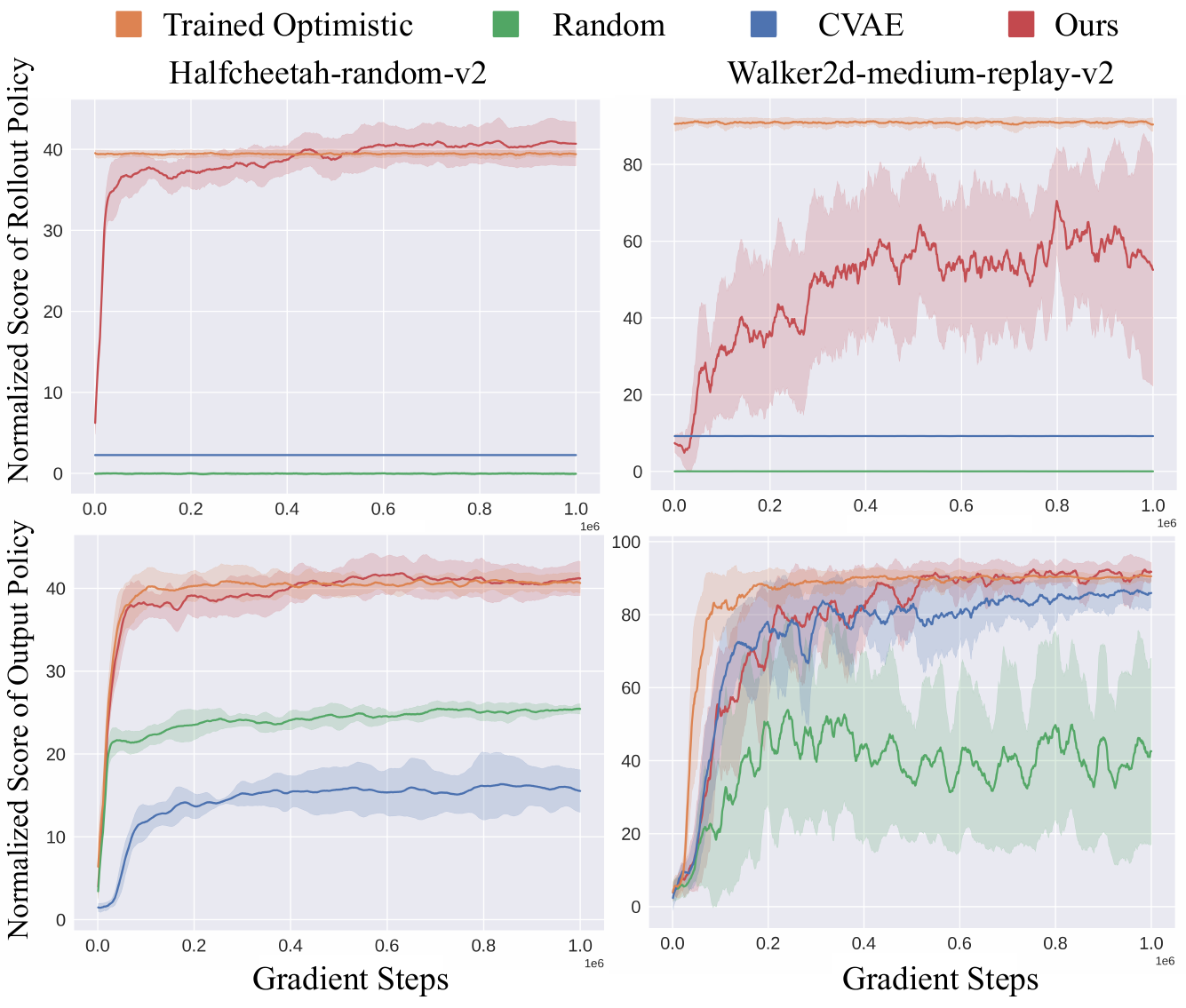}
	\caption{\small Learning curves of rollout policies and corresponding output policies in two datasets over 5 different seeds. }
	\label{fig-Rollout}
\end{figure}

In Figure~\ref{fig-Rollout}, we report the normalized average scores of the rollout policies and output policies in two datasets.
Except for ours, all baseline rollout policies have fixed parameters. So the scores of them are constant.
The score of CVAE rollout policies is only slightly higher than that of the random policies.
This is because CVAE policies are trained to generate rollouts within the support of the offline dataset, while these two datasets conclude many low-value transitions.
In contrast, our optimistic rollout policy can achieve the highest scores due to more valuable model rollouts.


\begin{figure*}[h]
	\centering
	\includegraphics[width=0.95\textwidth]{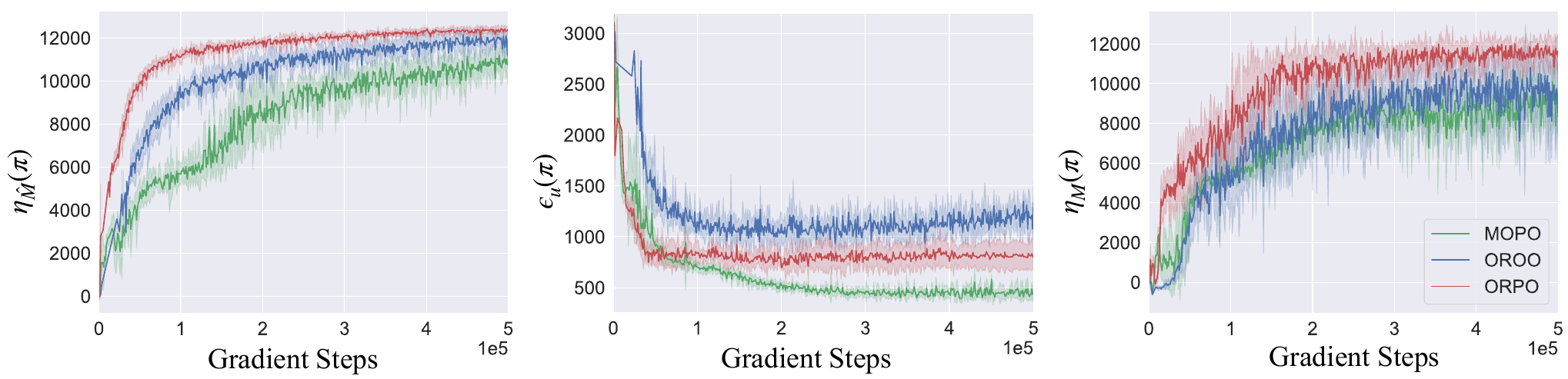}
	\caption{Learning curves of OROO, MOPO, and ORPO over 5 different seeds on ``Halfcheetah-medium-expert-v2''. We report the expected discounted returns in the model MDP $\eta_{\hatM}( \pi)$ and the real MDP $\eta_{M}( \pi)$ as well as the average model error $\epsilon_u( \pi)$. }
	\label{fig-O-MDP-exp}
\end{figure*}

Considering our goal is to achieve high scores for the output policies, the CVAE rollout policy is more effective than the random rollout policy in ``Walker2d-medium-replay-v2'' dataset and vice verse in ``Halfcheetah-random-v2''.
This is because the CVAE rollout policies trained by ``medium-replay'' datasets can generate more valuable rollouts with high-value state-action pairs, but not in ``random'' datasets.
The output policies of ORPO can obtain the highest scores in both datasets.
Though our rollout policy on the ``Walker2d-medium-replay-v2'' dataset can not achieve satisfactory performance, it is still beneficial for optimizing the output policy.
While using fixed well-trained rollouts policy can match our performance, we notice it can be affected by the selected checkpoint.
Therefore, we train optimistic rollout policy and pessimistic output policy iteratively and alternately.




\subsection{Ablation study}
\label{exp:ablation}
\subsubsection{Effect of P-MDP and O-MDP:}We conducted an analysis comparing the expected discounted returns ($\eta_{\tM}(\pi)$ and $\eta_{\hatM}(\pi)$) and the average model uncertainty ($\epsilon_u(\pi)$) of our output policy with the O-MDP and P-MDP baselines, i.e., OROO and MOPO methods.
As shown in Figure~\ref{fig-O-MDP-exp}, we observed that with more OOD sampling, OROO achieves a higher $\eta_{\hatM}(\pi)$ compared to MOPO. 
However, when evaluating OROO in the real environment, we observed a performance significant deterioration due to the noticeable increase in $\epsilon_u(\pi)$, indicating larger model errors. As a result, MOPO remains comparable to OROO in terms of $\eta_{\tM}(\pi)$ on this dataset.

ORPO effectively prevents the agent from accessing risky or potentially dangerous areas. This explains why ORPO achieves higher $\eta_{\hatM}(\pi)$ and lower $\epsilon_u(\pi)$ than OROO. 
Consequently, our method achieves better performance in the real environment ($\eta_M(\pi)$) compared to the baselines that use either P-MDP or O-MDP. Thus, we conclude that ORPO achieves a better trade-off between the generalization ability and estimation errors of the learned dynamics model.

\subsubsection{Sensitivity of the hyper-parameter $\lambda^o$:}
We also conduct experiments to evaluate the sensitivity of ORPO to the hyper-parameter $\lambda^o$, which is used to construct the O-MDPs. 
As shown in Table~\ref{table:lambda}, our results indicate that ORPO achieves satisfactory performance across a wide range of $\lambda^o$ values spanning three orders of magnitude. 
The values of $\lambda^p$ for the two configurations were 4.56 and 2.48, respectively.
Since we optimize the output policies in the P-MDP,  the rollout policy is optimistic as long as $\lambda^o > -\lambda^p$. 
Results show that incorporating optimism can  bring significant performance gains.
Due to the robustness of ORPO to the choice of $\lambda^o$, there is no need to finely tune this parameter for each environment-dataset configuration, and we set $\lambda^o=0.015$ by default for 9 out of the 12 datasets used in the D4RL benchmarks.


\begin{table}[h]
	\centering
		\resizebox{0.48\textwidth}{!}{
			\begin{tabular}{l  c c  c c c c c c}
				\toprule
				&-10&-1 &-0.1&-0.01&0.01&0.1&1&10   \\ 
				\midrule
				H-R &27.7& 39.1 & 40.6 &38.5&40.2& 41.2&41.6&26.0  \\
				W-M-R &43.7 & 85.5  & 91.1&91.3&91.8& 88.6& 8.6&0.1 \\ 
				\bottomrule
			\end{tabular}
		}
		\caption{
			Ablation of the different optimism hyper-parameter $\lambda^o$ for ORPO.
			``H-R'' represents HalfCheetah-Random-v2 and ``W-M-R'' represents Walker2d-Medium-Replay-v2.}
		\label{table:lambda}
	\end{table}


\subsubsection{Other ablation studies:}
We briefly report the results compared to other baselines. 
1) We compare to MOPO (TD3+BC) which replaces SAC in MOPO with TD3+BC for policy optimization.
The results migrate the effect of different RL algorithms on performance gain over MOPO and suggest the effectiveness of optimistic rollouts.
2) We compare ORPO to ORPO (SAC), which use SAC to optimize both rollout policies and output policies, and demonstrate the effectiveness of utilizing offline RL algorithms for pessimistic policy optimization.
3) We compare to ORPO without pessimism which replaces the P-MDP used in ORPO with the model MDP, and demonstrate the necessity of pessimism in ORPO.
Complete results can be found in Appendix~\ref{appendix:baselines}.


\section{Conclusion and Limitations}
	\label{sec:conclusion}
In this paper, we started with the observation that incorporating optimism when generating model rollouts can yield benefits for model-based offline RL. Building upon this insight, we have introduced ORPO, a novel framework that leverages optimistic model rollouts for pessimistic policy optimization. The theoretical analysis of ORPO demonstrates its efficiency in addressing the challenges of offline RL. Through extensive empirical evaluations, we have demonstrated that ORPO significantly enhances the performance of the P-MDP baseline and surpasses state-of-the-art methods on both the D4RL benchmark and tasks demanding generalization. 

	
	Our work has limitations.
	One limitation is the additional time overhead required for training optimistic rollout policies. Additionally, in tasks where OOD actions are strictly prohibited, ORPO may have negative effects compared to existing P-MDP baselines. Therefore, one future direction is to explore an adaptive degree of optimism when evaluation\cut{ to address this limitation}.


\section{Acknowledgements}
This work was supported by the National Key R\&D Program of China (No. 2021ZD0112904).
	
\bibliography{ORPO}

\begin{thebibliography}{40}
\providecommand{\natexlab}[1]{#1}

\bibitem[{Abbasi-Yadkori, P{\'a}l, and Szepesv{\'a}ri(2011)}]{bandit-2011}
Abbasi-Yadkori, Y.; P{\'a}l, D.; and Szepesv{\'a}ri, C. 2011.
\newblock Improved algorithms for linear stochastic bandits.
\newblock \emph{Advances in neural information processing systems}, 24.

\bibitem[{An et~al.(2021)An, Moon, Kim, and Song}]{EDAC}
An, G.; Moon, S.; Kim, J.-H.; and Song, H.~O. 2021.
\newblock Uncertainty-based offline reinforcement learning with diversified q-ensemble.
\newblock \emph{Advances in neural information processing systems}, 34: 7436--7447.

\bibitem[{Audibert, Munos, and Szepesv{\'a}ri(2009)}]{audibert2009exploration}
Audibert, J.-Y.; Munos, R.; and Szepesv{\'a}ri, C. 2009.
\newblock Exploration--exploitation tradeoff using variance estimates in multi-armed bandits.
\newblock \emph{Theoretical Computer Science}, 410(19): 1876--1902.

\bibitem[{Bai et~al.(2022)Bai, Wang, Yang, Deng, Garg, Liu, and Wang}]{PBRL}
Bai, C.; Wang, L.; Yang, Z.; Deng, Z.; Garg, A.; Liu, P.; and Wang, Z. 2022.
\newblock Pessimistic bootstrapping for uncertainty-driven offline reinforcement learning.
\newblock \emph{International Conference on Learning Representations}.

\bibitem[{Bradtke and Barto(1996)}]{bradtke1996linear}
Bradtke, S.~J.; and Barto, A.~G. 1996.
\newblock Linear least-squares algorithms for temporal difference learning.
\newblock \emph{Machine learning}, 22(1-3): 33--57.

\bibitem[{Clavera, Fu, and Abbeel(2020)}]{clavera2019model}
Clavera, I.; Fu, Y.; and Abbeel, P. 2020.
\newblock Model-Augmented Actor-Critic: Backpropagating through Paths.
\newblock In \emph{International Conference on Learning Representations}.

\bibitem[{Fu et~al.(2020)Fu, Kumar, Nachum, Tucker, and Levine}]{fu2020d4rl}
Fu, J.; Kumar, A.; Nachum, O.; Tucker, G.; and Levine, S. 2020.
\newblock D4rl: Datasets for deep data-driven reinforcement learning.
\newblock \emph{arXiv preprint arXiv:2004.07219}.

\bibitem[{Fujimoto and Gu(2021)}]{TD3BC}
Fujimoto, S.; and Gu, S.~S. 2021.
\newblock A minimalist approach to offline reinforcement learning.
\newblock \emph{Advances in neural information processing systems}, 34: 20132--20145.

\bibitem[{Fujimoto, Hoof, and Meger(2018)}]{TD3}
Fujimoto, S.; Hoof, H.; and Meger, D. 2018.
\newblock Addressing function approximation error in actor-critic methods.
\newblock In \emph{International conference on machine learning}, 1587--1596. PMLR.

\bibitem[{Fujimoto, Meger, and Precup(2019)}]{BCQ}
Fujimoto, S.; Meger, D.; and Precup, D. 2019.
\newblock Off-policy deep reinforcement learning without exploration.
\newblock In \emph{International conference on machine learning}, 2052--2062. PMLR.

\bibitem[{Gottesman et~al.(2019)Gottesman, Johansson, Komorowski, Faisal, Sontag, Doshi-Velez, and Celi}]{gottesman2019guidelines}
Gottesman, O.; Johansson, F.; Komorowski, M.; Faisal, A.; Sontag, D.; Doshi-Velez, F.; and Celi, L.~A. 2019.
\newblock Guidelines for reinforcement learning in healthcare.
\newblock \emph{Nature medicine}, 25(1): 16--18.

\bibitem[{Guo, Yunfeng, and Geng(2022)}]{guo2022model}
Guo, K.; Yunfeng, S.; and Geng, Y. 2022.
\newblock Model-based offline reinforcement learning with pessimism-modulated dynamics belief.
\newblock \emph{Advances in Neural Information Processing Systems}, 35: 449--461.

\bibitem[{Haarnoja et~al.(2018)Haarnoja, Zhou, Abbeel, and Levine}]{SAC}
Haarnoja, T.; Zhou, A.; Abbeel, P.; and Levine, S. 2018.
\newblock Soft actor-critic: Off-policy maximum entropy deep reinforcement learning with a stochastic actor.
\newblock In \emph{International conference on machine learning}, 1861--1870. PMLR.

\bibitem[{Janner et~al.(2019)Janner, Fu, Zhang, and Levine}]{MBPO}
Janner, M.; Fu, J.; Zhang, M.; and Levine, S. 2019.
\newblock When to trust your model: Model-based policy optimization.
\newblock \emph{Advances in Neural Information Processing Systems}, 32.

\bibitem[{Jin et~al.(2018)Jin, Allen-Zhu, Bubeck, and Jordan}]{jin2018q}
Jin, C.; Allen-Zhu, Z.; Bubeck, S.; and Jordan, M.~I. 2018.
\newblock Is Q-learning provably efficient?
\newblock \emph{NeurIPS}, 31.

\bibitem[{Jin et~al.(2020)Jin, Yang, Wang, and Jordan}]{lsvi-2020}
Jin, C.; Yang, Z.; Wang, Z.; and Jordan, M.~I. 2020.
\newblock Provably efficient reinforcement learning with linear function approximation.
\newblock In \emph{Conference on Learning Theory}, 2137--2143. PMLR.

\bibitem[{Jin, Yang, and Wang(2021)}]{pevi-2021}
Jin, Y.; Yang, Z.; and Wang, Z. 2021.
\newblock Is pessimism provably efficient for offline rl?
\newblock In \emph{International Conference on Machine Learning}, 5084--5096. PMLR.

\bibitem[{Kidambi et~al.(2020)Kidambi, Rajeswaran, Netrapalli, and Joachims}]{MOReL}
Kidambi, R.; Rajeswaran, A.; Netrapalli, P.; and Joachims, T. 2020.
\newblock Morel: Model-based offline reinforcement learning.
\newblock \emph{Advances in neural information processing systems}, 33: 21810--21823.

\bibitem[{Kostrikov, Nair, and Levine(2022)}]{IQL}
Kostrikov, I.; Nair, A.; and Levine, S. 2022.
\newblock Offline reinforcement learning with implicit q-learning.
\newblock In \emph{International Conference on Learning Representations}.

\bibitem[{Kumar et~al.(2019)Kumar, Fu, Soh, Tucker, and Levine}]{BEAR}
Kumar, A.; Fu, J.; Soh, M.; Tucker, G.; and Levine, S. 2019.
\newblock Stabilizing off-policy q-learning via bootstrapping error reduction.
\newblock \emph{Advances in Neural Information Processing Systems}, 32.

\bibitem[{Kumar et~al.(2020)Kumar, Zhou, Tucker, and Levine}]{CQL}
Kumar, A.; Zhou, A.; Tucker, G.; and Levine, S. 2020.
\newblock Conservative q-learning for offline reinforcement learning.
\newblock \emph{Advances in Neural Information Processing Systems}, 33: 1179--1191.

\bibitem[{Lee et~al.(2022)Lee, Seo, Lee, Abbeel, and Shin}]{lee2022offline}
Lee, S.; Seo, Y.; Lee, K.; Abbeel, P.; and Shin, J. 2022.
\newblock Offline-to-online reinforcement learning via balanced replay and pessimistic q-ensemble.
\newblock In \emph{Conference on Robot Learning}, 1702--1712. PMLR.

\bibitem[{Levine et~al.(2020)Levine, Kumar, Tucker, and Fu}]{levine2020offline}
Levine, S.; Kumar, A.; Tucker, G.; and Fu, J. 2020.
\newblock Offline reinforcement learning: Tutorial, review, and perspectives on open problems.
\newblock \emph{arXiv preprint arXiv:2005.01643}.

\bibitem[{Lockwood and Si(2022)}]{lockwood2022review}
Lockwood, O.; and Si, M. 2022.
\newblock A Review of Uncertainty for Deep Reinforcement Learning.
\newblock In \emph{Proceedings of the AAAI Conference on Artificial Intelligence and Interactive Digital Entertainment}, volume~18, 155--162.

\bibitem[{Lu et~al.(2022)Lu, Ball, Parker-Holder, Osborne, and Roberts}]{lu2022revisiting}
Lu, C.; Ball, P.; Parker-Holder, J.; Osborne, M.; and Roberts, S.~J. 2022.
\newblock Revisiting design choices in offline model based reinforcement learning.
\newblock In \emph{International Conference on Learning Representations}.

\bibitem[{Luo et~al.(2019)Luo, Xu, Li, Tian, Darrell, and Ma}]{SLBO}
Luo, Y.; Xu, H.; Li, Y.; Tian, Y.; Darrell, T.; and Ma, T. 2019.
\newblock Algorithmic Framework for Model-based Deep Reinforcement Learning with Theoretical Guarantees.
\newblock In \emph{International Conference on Learning Representations}.

\bibitem[{Lyu, Li, and Lu(2022)}]{CABI}
Lyu, J.; Li, X.; and Lu, Z. 2022.
\newblock Double Check Your State Before Trusting It: Confidence-Aware Bidirectional Offline Model-Based Imagination.
\newblock In \emph{Advances in Neural Information Processing Systems}.

\bibitem[{Mandlekar et~al.(2020)Mandlekar, Ramos, Boots, Savarese, Fei-Fei, Garg, and Fox}]{mandlekar2020iris}
Mandlekar, A.; Ramos, F.; Boots, B.; Savarese, S.; Fei-Fei, L.; Garg, A.; and Fox, D. 2020.
\newblock Iris: Implicit reinforcement without interaction at scale for learning control from offline robot manipulation data.
\newblock In \emph{2020 IEEE International Conference on Robotics and Automation (ICRA)}, 4414--4420. IEEE.

\bibitem[{Melo and Ribeiro(2007)}]{melo2007q}
Melo, F.~S.; and Ribeiro, M.~I. 2007.
\newblock Q-learning with linear function approximation.
\newblock In \emph{Learning Theory: 20th Annual Conference on Learning Theory, COLT 2007, San Diego, CA, USA; June 13-15, 2007. Proceedings 20}, 308--322. Springer.

\bibitem[{Moerland et~al.(2023)Moerland, Broekens, Plaat, Jonker et~al.}]{moerland2023model}
Moerland, T.~M.; Broekens, J.; Plaat, A.; Jonker, C.~M.; et~al. 2023.
\newblock Model-based reinforcement learning: A survey.
\newblock \emph{Foundations and Trends{\textregistered} in Machine Learning}, 16(1): 1--118.

\bibitem[{Pan et~al.(2020)Pan, He, Tu, and He}]{pan2020trust}
Pan, F.; He, J.; Tu, D.; and He, Q. 2020.
\newblock Trust the model when it is confident: Masked model-based actor-critic.
\newblock \emph{Advances in neural information processing systems}, 33: 10537--10546.

\bibitem[{Rafailov et~al.(2021)Rafailov, Yu, Rajeswaran, and Finn}]{rafailov2021offline}
Rafailov, R.; Yu, T.; Rajeswaran, A.; and Finn, C. 2021.
\newblock Offline reinforcement learning from images with latent space models.
\newblock In \emph{Learning for Dynamics and Control}, 1154--1168. PMLR.

\bibitem[{Rigter et~al.(2022)Rigter, Lacerda, , and Hawes}]{RAMBO}
Rigter, M.; Lacerda, B.; ; and Hawes, N. 2022.
\newblock Rambo-rl: Robust adversarial model-based offline reinforcement learning.
\newblock \emph{Advances in neural information processing systems}, 35: 16082--16097.

\bibitem[{Sutton(1990)}]{sutton1990integrated}
Sutton, R.~S. 1990.
\newblock Integrated architectures for learning, planning, and reacting based on approximating dynamic programming.
\newblock In \emph{Machine learning proceedings 1990}, 216--224. Elsevier.

\bibitem[{Todorov, Erez, and Tassa(2012)}]{todorov2012mujoco}
Todorov, E.; Erez, T.; and Tassa, Y. 2012.
\newblock Mujoco: A physics engine for model-based control.
\newblock In \emph{2012 IEEE/RSJ international conference on intelligent robots and systems}, 5026--5033. IEEE.

\bibitem[{Wang et~al.(2021)Wang, Li, Jiang, Zhu, Li, and Zhang}]{ROMI}
Wang, J.; Li, W.; Jiang, H.; Zhu, G.; Li, S.; and Zhang, C. 2021.
\newblock Offline reinforcement learning with reverse model-based imagination.
\newblock \emph{Advances in Neural Information Processing Systems}, 34: 29420--29432.

\bibitem[{Wu et~al.(2021)Wu, Zhai, Srivastava, Susskind, Zhang, Salakhutdinov, and Goh}]{UWAC}
Wu, Y.; Zhai, S.; Srivastava, N.; Susskind, J.~M.; Zhang, J.; Salakhutdinov, R.; and Goh, H. 2021.
\newblock Uncertainty Weighted Actor-Critic for Offline Reinforcement Learning.
\newblock In \emph{International Conference on Machine Learning}, 11319--11328. PMLR.

\bibitem[{Yu et~al.(2018)Yu, Xian, Chen, Liu, Liao, Madhavan, and Darrell}]{yu2018bdd100k}
Yu, F.; Xian, W.; Chen, Y.; Liu, F.; Liao, M.; Madhavan, V.; and Darrell, T. 2018.
\newblock Bdd100k: A diverse driving video database with scalable annotation tooling.
\newblock \emph{arXiv preprint arXiv:1805.04687}, 2(5): 6.

\bibitem[{Yu et~al.(2021)Yu, Kumar, Rafailov, Rajeswaran, Levine, and Finn}]{COMBO}
Yu, T.; Kumar, A.; Rafailov, R.; Rajeswaran, A.; Levine, S.; and Finn, C. 2021.
\newblock Combo: Conservative offline model-based policy optimization.
\newblock \emph{Advances in neural information processing systems}, 34: 28954--28967.

\bibitem[{Yu et~al.(2020)Yu, Thomas, Yu, Ermon, Zou, Levine, Finn, and Ma}]{MOPO}
Yu, T.; Thomas, G.; Yu, L.; Ermon, S.; Zou, J.~Y.; Levine, S.; Finn, C.; and Ma, T. 2020.
\newblock Mopo: Model-based offline policy optimization.
\newblock \emph{Advances in Neural Information Processing Systems}, 33: 14129--14142.

\end{thebibliography}

\newpage
\onecolumn
%
\appendix
\onecolumn
\newcommand{\RR}{\mathbb{R}}

\section{Theoretical Supports for ORPO}
\label{appendix-Theoretical}

\subsection{Proof of Proposition~\ref{pro:uncertainty}}
In this section, we study the model-based uncertainty quantification under the linear MDP assumptions \cite{bradtke1996linear, melo2007q}. 

\begin{assumption}[Linear model MDP] \label{assumption:linear}
	The model MDP$(\cS, \cA, \hat T, r, H, \gamma)$ is a \emph{linear MDP} with a feature map $\phi: \cS \times \cA \rightarrow \mathbb{R}^d$, where $H$ is the length of each episode and we set $\gamma = 1$ for simplicity.
	 For any $h\in [H]$, there exist $d$ \emph{unknown} (signed) measures $w$ over $\cS$ and an \emph{unknown} vector $\mu \in \mathbb{R}^d$, such that 
	for any $(s, a) \in \cS \times \cA$, we have 
	\begin{align}\label{eq:linear_transition}  
		\hat T(\hat s' |s, a) = \phi(s, a)w,   \qquad 
		\hat T(r |s, a) = \phi(s, a) \mu.  
	\end{align}
	Following previous works, we assume $||{\phi(s, a)}\|| \le 1$ for all $(s,a ) \in \cS \times \cA$, and $\max\{||{w(\cS)}||, ||\mu||\} \le \sqrt{d}$ for all $h \in [H]$.
\end{assumption}
In model-based offline RL, dynamics models are typically learned in the supervised manner.
The parameter $w$ can be solved in the closed form by following the Least-Squares Value Iteration (LSVI) algorithm~\cite{bradtke1996linear}, which minimizes the following loss function,
\begin{equation}
	\label{eq::appendix_OOD_LSVI}
	\widehat{w} = \min_{w \in \RR^{d}} \sum^m_{i = 1}\|\phi(s^i,a^i){w} -  s'^i\|^2 +  \beta \cdot \|w\|_2^2,
\end{equation}
where $s'^i$, as the target of LSVI, is the next state for $(s^i,a^i)$ in the offline dataset. 
$m$ represents the effective number of samples the agent has observed so far along the $\phi$ direction,
The explicit solution to (\ref{eq::appendix_OOD_LSVI}) takes the form of
\begin{equation}
	\label{eq::pf ilde_w}
	\widehat{w} = \Lambda^{-1}\sum^m_{i = 1}\phi(s^i,a^i)s'^i,\quad \Lambda  = \sum^m_{i=1}\phi(s^i,a^i)\phi(s^i,a^i)^\top + \beta \cdot \mathrm{\mathbf{I}},
\end{equation}
where $\Lambda $ accumulates the state-action features from the offline dataset.  
From the theoretical perspective, appropriate uncertainty quantification is essential to the provable efficiency in our framework.
Therefore, we make the following proposition:
\begin{proposition}
\label{uncertainty}
With linear function assumption, $\phi(s,a)^\top\Lambda^{-1}\phi(s,a)$ yields model-based uncertainty quantification.
\end{proposition}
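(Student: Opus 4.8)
The plan is to show that the quadratic form $\phi(s,a)^\top\Lambda^{-1}\phi(s,a)$ controls the estimation error of the learned transition parameter $\widehat{w}$, and that this error is precisely what the ensemble standard deviation $u(s,a)$ captures, so that the former is a valid uncertainty quantifier (a UCB-style bonus). First I would set up the standard linear-MDP error decomposition: using the closed form $\widehat{w} = \Lambda^{-1}\sum_{i=1}^m \phi(s^i,a^i) s'^i$ from Equation~\eqref{eq::pf ilde_w} together with the true relation $\hat{T}(\hat s'|s,a)=\phi(s,a)w$ from Assumption~\ref{assumption:linear}, I would write the prediction error at a query point $(s,a)$ as $\phi(s,a)^\top(\widehat{w}-w)$ and expand $\widehat{w}-w$ in terms of the regularizer $\beta$ and the accumulated noise $\sum_i \phi(s^i,a^i)\xi^i$, where $\xi^i$ is the zero-mean target noise.

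The key step is then a Cauchy--Schwarz argument in the $\Lambda^{-1}$-weighted inner product: I would bound
\begin{equation}
	|\phi(s,a)^\top(\widehat{w}-w)| \le \|\phi(s,a)\|_{\Lambda^{-1}} \cdot \Bigl\| \beta w - \sum_{i} \phi(s^i,a^i)\xi^i \Bigr\|_{\Lambda^{-1}},
\end{equation}
so that the query-dependent factor is exactly $\|\phi(s,a)\|_{\Lambda^{-1}} = \sqrt{\phi(s,a)^\top\Lambda^{-1}\phi(s,a)}$. The remaining factor is bounded by a constant (depending on $d$, $\beta$, and a high-probability concentration bound on the self-normalized noise term, the standard elliptical-potential / self-normalized martingale bound). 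This establishes that, with high probability, the model error is upper-bounded by a constant multiple of $\sqrt{\phi(s,a)^\top\Lambda^{-1}\phi(s,a)}$, which is the defining property of a UCB bonus and connects directly to the admissible-uncertainty condition in Equation~\eqref{equ:condition}.

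The remaining piece is to tie the quadratic form back to the \emph{ensemble} standard deviation that ORPO actually uses, namely $u(s,a):=\text{\rm Std}(\{\hatT_i(s,a)\}_{i=1}^N)$. Here I would argue that when each ensemble member $\hatT_i$ is fit by LSVI on a bootstrapped or perturbed version of the data (equivalently, with independent perturbations to the targets or a randomized prior), the posterior/bootstrap variance of the prediction $\phi(s,a)^\top\widehat{w}_i$ across members is proportional to $\phi(s,a)^\top\Lambda^{-1}\phi(s,a)$, since the spread of $\widehat{w}_i$ is governed by the same covariance $\Lambda^{-1}$. Thus the empirical ensemble standard deviation is (up to a constant) an estimate of $\sqrt{\phi(s,a)^\top\Lambda^{-1}\phi(s,a)}$, which is what justifies using $u(s,a)$ in place of the theoretical bonus.

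I expect the main obstacle to be the concentration argument for the self-normalized noise term $\|\sum_i \phi(s^i,a^i)\xi^i\|_{\Lambda^{-1}}$ and, more delicately, making rigorous the identification between the \emph{ensemble} standard deviation and the $\Lambda^{-1}$-quadratic form: the clean proportionality holds exactly only under a Bayesian linear-Gaussian or bootstrap model of how the ensemble members are trained, and in the actual nonlinear probabilistic networks this is only a heuristic correspondence. I would therefore state the proportionality under the linear-Gaussian ensemble assumption and note that in practice the networks inherit this behavior approximately; the core mathematical content (Cauchy--Schwarz plus self-normalized concentration) is routine, so the burden is really in setting up the ensemble-variance identity cleanly rather than in any single calculation.
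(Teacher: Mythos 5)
Your proposal is correct, but it proves the statement by a genuinely different route than the paper. The paper's proof is a short Bayesian linear-regression calculation: it posits a Gaussian prior $w \sim \mathcal{N}(\mathbf{0}, \mathrm{\mathbf{I}}/\lambda)$ and unit Gaussian target noise, derives the posterior $w \mid \D \sim \mathcal{N}(\mu, \Lambda^{-1})$ with $\mu = \Lambda^{-1}\sum_i \phi(s^i,a^i)s'^i$, and reads off the predictive variance $\text{\rm Var}\bigl(\phi(s,a)^\top\widehat{w}\bigr) = \phi(s,a)^\top\Lambda^{-1}\phi(s,a)$ exactly, identifying the ensemble standard deviation with this posterior standard deviation by fiat. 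You instead take the frequentist route: decompose $\phi(s,a)^\top(\widehat{w}-w)$, apply Cauchy--Schwarz in the $\Lambda^{-1}$-weighted norm, and control the self-normalized noise term by the standard martingale concentration bound, which yields the high-probability error bound $|\phi(s,a)^\top(\widehat{w}-w)| \le C\,[\phi(s,a)^\top\Lambda^{-1}\phi(s,a)]^{\nicefrac{1}{2}}$. Your approach buys something the paper's does not: it directly establishes the admissibility property of Equation~\ref{equ:condition} (that the quantity upper-bounds the model error), which the paper instead defers to Lemma~\ref{lem::xi} by citation; the cost is the extra concentration machinery. The paper's approach buys an exact identity rather than an upper bound, at the cost of the Bayesian modeling assumptions. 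Both treatments face the same soft spot you correctly flag --- the identification of the empirical ensemble spread of nonlinear probabilistic networks with the $\Lambda^{-1}$-quadratic form holds exactly only under a linear-Gaussian posterior or bootstrap model of ensemble training, and the paper asserts this identification (via $u(s,a) = \text{\rm Std}(\phi(s,a)^\top\widehat{w})$) no more rigorously than you propose to.
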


\begin{proof}
	Training dynamics models is a regression task.
	We consider a Bayesian linear regression perspective of LSVI in Eq.~\ref{eq::appendix_OOD_LSVI}.
	We further define the noise $\sigma$ in this least-square problem as follows,
	
	\begin{equation}\label{eq:yt}
		\sigma=s'^i - \phi(s^i,a^i)^\top{w},
	\end{equation}
	where $w$ is the underlying true parameter. In the offline dataset with $\D$, we denote by $\widehat{w}$ the Bayesian posterior of $w$ given the dataset $\D$. In addition, we assume that we are given a multivariate Gaussian prior of the parameter $w \sim \mathcal N(\mathbf{0}, \mathrm{\mathbf{I}}/\lambda)$ as a non-informative prior and noise $\sigma\sim \mathcal N(0, 1)$.
	Then we obtain that 
	
	\begin{equation}\label{eq::pf_density_y}
		s'^i \:|\: \bigl((s^i, a^i), \widehat{w}\bigr) \sim \mathcal{N}\bigl(\phi(s^i, a^i)^\top \widehat{w}, 1\bigr).	
	\end{equation}
	Our objective is to compute the posterior density $\widehat{w} = w \:|\: \D$. It holds from Bayes rule that 
	\begin{equation}\label{eq::pf_bayes_rule}
		\log p(\widehat{w} \:|\: \D) = \log p(\widehat{w}) + \log p(\D \:|\: \widehat{w}) + {\rm Const},
	\end{equation}
	where $p(\cdot)$ denotes the probability density function of the respective distributions. Plugging  Eq. (\ref{eq::pf_density_y}) and the probability density function of Gaussian distribution into Eq.(\ref{eq::pf_bayes_rule}) yields
	\begin{equation}\label{eq::pf_density_posterior}
		\begin{aligned}
			\log p(\widehat{w} \:|\: \D) &= -\|\widehat{w}\|^2/2 -\sum^m_{i=1} \| \phi(s^i, a^i)^\top \widehat{w} - s'^i\|^2/2 + {\rm Const.}\notag\\
			&=-(\widehat{w} - \mathbf{\mu})^\top \Lambda^{-1}(\widehat{w} - \mathbf{\mu})/2 + {\rm Const.},
		\end{aligned}
	\end{equation}
	where we define 
	\begin{equation}
		\mathbf{\mu}= \Lambda^{-1}  \sum^m_{i = 1}\phi(s^i, a^i) s'^i, \qquad \Lambda =\sum_{i=1}^{m}\phi(s^i,a^i)\phi(s^i,a^i)^\top+\beta \cdot \mathrm{\mathbf{I}}.
	\end{equation}
	Then we obtain that $\widehat{w} = w\:|\: \D \sim \mathcal N(\mathbf{\mu}, \Lambda^{-1})$. It then holds for all $(s, a)\in\mathcal{S}\times\mathcal{A}$ that
	\begin{equation}
		\label{222}
		\text{\rm Var}\bigl(\phi(s, a)^\top \widehat{w} \bigr)  = \phi(s, a)^\top \Lambda^{-1} \phi(s, a). 
	\end{equation}

	
We can utilize the following uncertainty quantification $u(s,a)$ for the model error estimator, which is captured by the approximate predictive standard deviation (Std) with respect to the ensembles of the learned dynamics model. 
	\begin{equation}
		\label{333}
		u(s, a):=\text{\rm Std}\bigl( \hatT(s, a)\bigr)=\text{\rm Std}\bigl(\phi(s, a)^\top \widehat{w} \bigr)  =\big[ \phi(s, a)^\top \Lambda^{-1}  \phi(s, a )\big]^{\nicefrac{1}{2}}.
	\end{equation}
	
\end{proof}

In Proposition~\ref{uncertainty}, we show that the uncertainty quantified by the standard deviation has the form $\big[ \phi(s, a)^\top \Lambda^{-1}  \phi(s, a)\big]^{\nicefrac{1}{2}}$, which can be used to construct the Upper Confidence Bound (UCB).
Exploration with UCB as a bonus is provably efficient in online RL~\cite{bandit-2011,lsvi-2020}.
While in the offline RL setting, such uncertainty quantification can also be used to construct the Lower Confidence Bound (LCB) penalty:
\begin{equation}
	\label{ucb+lcb}
	\begin{array}{cc}
		& 
		 \Gamma^{\rm lcb}(s ,a )=\lambda^p \cdot\big[\phi(s ,a )^\top\Lambda ^{-1}\phi(s ,a )\big]^{\nicefrac{1}{2}},
	\end{array}
\end{equation}
which measures the confidence interval of the dynamics models with the given training data.
Besides, using $\Gamma^{\rm lcb}(s ,a )$ for pessimistic LSVI is known to be information-theoretically optimal in offline RL~\cite{pevi-2021}.


We remark that there are two options to apply $\Gamma^{\rm ucb}(s ,a )$ or $\Gamma^{\rm lcb}(s ,a )$: on the next-Q value or the immediate reward.
Considering the update of Q functions, the two options have the same influence with tuning weights.
To maintain consistency with previous model-based offline works, we use the latter option to construct the P-MDP and O-MDP.

\subsection{Linear P-MDP}
In this section, we prove that under the assumption of linear model MDP, the constructed P-MDP is also a linear MDP.

Firstly, we define the Bellman operator based on the value function as 
\begin{equation}
	\mathcal{T}V_{\tM}(s,a):=\Esub{s'\sim\tT(s,a)}[r(s,a)+V^\pi_{\tM}(s')],
\end{equation}
and define $\widehat \cT$ as the empirical Bellman operator that estimates $\cT$ based on the dynamics model $\hat T$.
\begin{equation}
	\mathcal{\hat T}V_{\tM}(s,a):=\Esub{s'\sim\hat T(s,a)}[r(s,a)+V_{\tM}(s')].
\end{equation}
According to Equation~\ref{equation:G}, the model error defined in MOPO~\cite{MOPO} can also be expressed as:
\begin{equation}
	|G_{\hatM}(s,a)| =  | \mathcal{T}V_{\tM}(s,a) - \mathcal{\widehat T}V_{\tM}(s,a) |.
\end{equation}
We introduce the $\xi$-Uncertainty Quantifier, which plays an important part in the theoretical analysis of both online and offline RL.
\begin{definition}[$\xi$-Uncertainty Quantifier]\label{def1}
	The set of $\Gamma^{\rm lcb}(s , a )$ $ (\Gamma^{\rm lcb}:\cS\times\cA\to \mathbb{R}^1)$ forms a $\xi$-Uncertainty Quantifier if it holds with probability at least $1 - \xi$ that
\begin{equation}
		|\widehat \cT V_{\tM}(s, a) - \cT V_{\tM}(s , a )| \leq \Gamma^{\rm lcb}(s , a ),
\end{equation}
	for all $(s, a)\in\cS\times\cA$.
\end{definition}

Recent theoretical works shows that $\Gamma^{\rm lcb}(s , a )$ in Equation~\ref{ucb+lcb} can be a $\xi$-Uncertainty Quantifier for appropriately selected $\lambda^p$, which we show in the following lemma.
\begin{lemma}[ \citep{pevi-2021}]
	\label{lem::xi}
 With the assumption of the complaint data-collecting process, if we set $\beta=1$, it holds for $\lambda^p = c\cdot dH\sqrt{\zeta}$ that
	\begin{equation}
 \Gamma^{\rm lcb}(s ,a )=\lambda^p\big[\phi(s ,a )^\top\Lambda ^{-1}\phi(s ,a )\big]^{\nicefrac{1}{2}}
	\end{equation}
	forms a valid $\xi$-uncertainty quantifier, where $\zeta= \log(2dHK/\xi)$, $\xi \in (0,1)$ is the confidence parameter, $d$ is the ambient dimension of feature space, $K$ is the number of episodes in the offline data, and $H$ is the length of each episode.
\end{lemma}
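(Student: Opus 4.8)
The plan is to establish this as the model-based counterpart of the confidence-bound construction in \citep{pevi-2021}, adapting their self-normalized concentration analysis to the empirical Bellman operator $\widehat\cT$. First I would exploit the linear structure of Assumption~\ref{assumption:linear}: because both the reward and the next-state measure are linear in $\phi$, the true backup $\cT V_{\tM}(s,a)=\Esub{s'\sim\tT(s,a)}[r(s,a)+V_{\tM}(s')]$ is itself linear in the feature, i.e.\ $\cT V_{\tM}(s,a)=\phi(s,a)^\top\theta$ for a parameter $\theta$ depending on $V_{\tM}$. Its empirical version inherits the same form, $\widehat\cT V_{\tM}(s,a)=\phi(s,a)^\top\widehat\theta$, where $\widehat\theta=\Lambda^{-1}\sum_{i=1}^{m}\phi(s^i,a^i)[r^i+V_{\tM}(s'^i)]$ is the ridge estimate and $\Lambda=\sum_{i=1}^{m}\phi(s^i,a^i)\phi(s^i,a^i)^\top+\beta\,\mathrm{\mathbf{I}}$. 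This reduces the quantity of interest to $|\widehat\cT V_{\tM}(s,a)-\cT V_{\tM}(s,a)|=|\phi(s,a)^\top(\widehat\theta-\theta)|$.

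Next I would decompose $\widehat\theta-\theta$ into the ridge bias induced by $\beta$ and a stochastic term $\Lambda^{-1}\sum_i\phi(s^i,a^i)\epsilon^i$, where the $\epsilon^i$ are the Bellman residuals. Cauchy--Schwarz in the $\Lambda^{-1}$ norm then yields $|\phi(s,a)^\top(\widehat\theta-\theta)|\le\big[\phi(s,a)^\top\Lambda^{-1}\phi(s,a)\big]^{\nicefrac{1}{2}}\cdot\|\widehat\theta-\theta\|_{\Lambda}$, which already exhibits the elliptical factor $\big[\phi^\top\Lambda^{-1}\phi\big]^{\nicefrac{1}{2}}$ appearing in $\Gamma^{\rm lcb}$. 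It then remains to bound $\|\widehat\theta-\theta\|_{\Lambda}$ by a constant of order $dH\sqrt{\zeta}$: the bias contributes an $\mathcal{O}(\sqrt{d})$ term via the norm bounds $\|w(\cS)\|,\|\mu\|\le\sqrt d$ from Assumption~\ref{assumption:linear}, while the noise is controlled by the self-normalized (elliptical-potential) martingale concentration of \citep{bandit-2011} used in the online linear-MDP analysis of \citep{lsvi-2020}.

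The delicate part --- and the step I expect to be the main obstacle --- is that $V_{\tM}$ is not a fixed function but is itself built from the same offline data, so the residuals $\epsilon^i$ are not martingale differences with respect to a filtration independent of $V_{\tM}$, and the self-normalized inequality cannot be applied verbatim. Following \citep{pevi-2021}, I would circumvent this by a uniform concentration argument: build an $\varepsilon$-covering of the class of value functions induced by the linear parametrization, apply the self-normalized bound to each fixed net element, and union-bound over the net. The resulting log-covering-number, combined with the union over the $H$ steps and the confidence level $\xi$, produces the factor $\zeta=\log(2dHK/\xi)$ and pins down $\lambda^p=c\cdot dH\sqrt{\zeta}$; choosing $\beta=1$ keeps the bias subdominant, which verifies that $\Gamma^{\rm lcb}$ is a valid $\xi$-uncertainty quantifier in the sense of Definition~\ref{def1}.
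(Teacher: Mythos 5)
The paper does not actually prove this lemma: it is imported verbatim from \citep{pevi-2021} (their uncertainty-quantifier theorem for pessimistic value iteration under the compliant-dataset assumption), with the citation serving as the proof. Your sketch is a correct reconstruction of the argument given there --- linearity of the Bellman backup in $\phi$, the ridge decomposition with Cauchy--Schwarz in the $\Lambda^{-1}$-norm, the self-normalized concentration bound for the noise term, and, crucially, the $\varepsilon$-net/union-bound step to handle the data-dependence of the value function, which is exactly where the $\zeta=\log(2dHK/\xi)$ factor and the $dH\sqrt{\zeta}$ scaling of $\lambda^p$ come from --- so there is no gap relative to the source, only relative to the paper's choice to cite rather than prove.
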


According to Lemma~\ref{lem::xi} and the condition of admissible model uncertainty in Equation~\ref{equ:condition}, we can construct P-MDP based on $\Gamma^{\rm lcb}(s , a )$ by setting $\xi$ to approach 1.
\begin{proposition}
	A linear MDP penalized by $\Gamma^{\rm lcb}(s ,a )=\lambda_p u(s,a)$ forms a P-MDP proposed in MOPO~\cite{MOPO}, which is also a linear MDP.
\end{proposition}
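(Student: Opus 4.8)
The plan is to verify the two claims in the final proposition directly from the structural definitions already established. First I would show that the penalized transition-reward structure still fits the linear-MDP template of Assumption~\ref{assumption:linear}, and second that the particular penalty $\Gamma^{\rm lcb}(s,a)=\lambda^p u(s,a)$ coincides with MOPO's pessimistic reward $r^p(s,a)=r(s,a)-\lambda^p u(s,a)$.

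The key observation is that the P-MDP $\pM=(\mathcal{S},\mathcal{A},\hatT,r^p,\gamma)$ shares the \emph{same} transition kernel $\hatT$ as the original linear model MDP. Since $\hatT(\hat s'|s,a)=\phi(s,a)w$ is already linear in the feature map $\phi$ by Assumption~\ref{assumption:linear}, the transition component of the P-MDP automatically satisfies the linear-MDP condition with the identical measure $w$. So the only thing I would need to check is that the modified reward $r^p$ remains linear in $\phi$. Here I would invoke Proposition~\ref{uncertainty}: the uncertainty quantifier has the closed form $u(s,a)=\bigl[\phi(s,a)^\top\Lambda^{-1}\phi(s,a)\bigr]^{1/2}$, so $r^p(s,a)=r(s,a)-\lambda^p u(s,a)=\phi(s,a)\mu-\lambda^p\bigl[\phi(s,a)^\top\Lambda^{-1}\phi(s,a)\bigr]^{1/2}$.

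This is exactly where the main obstacle lies: the penalty term $\bigl[\phi(s,a)^\top\Lambda^{-1}\phi(s,a)\bigr]^{1/2}$ is \emph{not} linear in $\phi(s,a)$ because of the square root (and the quadratic form inside). Strictly speaking, this would break the linear-reward requirement $\hat T(r|s,a)=\phi(s,a)\mu$. The way I would resolve this is to follow the standard bonus-augmentation argument used in provably efficient RL (e.g.\ \citep{lsvi-2020,pevi-2021}): rather than insisting the penalty itself be linear, I would note that the pessimistic value iteration treats the bonus $\Gamma^{\rm lcb}$ as an additive correction to the estimated $Q$-function at each step, and the linear-MDP structure is preserved at the level of the Bellman backup because the expected next-state value $\Esub{s'\sim\hatT(s,a)}[V(s')]=\phi(s,a)\int V(s')\,w(ds')$ stays linear in $\phi$ regardless of the bonus. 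I would therefore argue that the P-MDP inherits the linear structure in the operative sense required for LSVI, with the bonus entering as the $\xi$-Uncertainty Quantifier of Definition~\ref{def1} rather than as part of the linear reward parameterization.

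Finally I would tie the two penalty expressions together. By Lemma~\ref{lem::xi}, $\Gamma^{\rm lcb}(s,a)=\lambda^p\bigl[\phi(s,a)^\top\Lambda^{-1}\phi(s,a)\bigr]^{1/2}=\lambda^p u(s,a)$ is a valid $\xi$-Uncertainty Quantifier, hence it upper-bounds $|\widehat\cT V_{\tM}(s,a)-\cT V_{\tM}(s,a)|=|G_{\hatM}(s,a)|$ with high probability. This is precisely the admissibility condition~\ref{equ:condition} that MOPO requires of its penalty $u$, so subtracting $\Gamma^{\rm lcb}$ from the reward reproduces MOPO's construction of $r^p$ and the accompanying lower bound~\ref{equ:lower-bound}. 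Combining the preserved transition linearity with the identification $\Gamma^{\rm lcb}=\lambda^p u$ then yields the claim that the penalized MDP is both the MOPO P-MDP and a linear MDP, completing the proof.
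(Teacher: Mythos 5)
Your first half — identifying $\Gamma^{\rm lcb}(s,a)=\lambda^p u(s,a)$ with MOPO's admissible uncertainty via Lemma~\ref{lem::xi} and Equation~\ref{equ:condition}, and noting the transition kernel is untouched so its linearity is inherited for free — coincides with the paper's proof. Where you genuinely diverge is on the linearity of the penalized reward. The paper argues directly: it diagonalizes $\Lambda^{-1}=P\tilde\Lambda P^\top$, rewrites the quadratic form as $(P^\top\phi(s,a))^\top\tilde\Lambda(P^\top\phi(s,a))$, and asserts that $\Gamma^{\rm lcb}$ is therefore ``linear with $\phi(s,a)$,'' concluding that $r^p=\phi(s,a)^\top\widehat{w}-\lambda^p\bigl[\phi(s,a)^\top\Lambda^{-1}\phi(s,a)\bigr]^{1/2}$ is linear in the features and the P-MDP is literally a linear MDP. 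You instead concede that $\bigl[\phi(s,a)^\top\Lambda^{-1}\phi(s,a)\bigr]^{1/2}$ is a (weighted) norm of $P^\top\phi(s,a)$ --- positively homogeneous but not additive, hence not linear --- and you recover the conclusion in the weaker ``operative'' sense: the Bellman backup $\Esub{s'\sim\hatT(s,a)}[V(s')]$ remains linear in $\phi$, and the penalty enters the analysis only as an additive $\xi$-uncertainty quantifier on top of the LSVI regression, which is exactly how Algorithm~\ref{algo:LSVI-ORPO} and Lemma~\ref{supple:prop:efficiency} actually consume it. Your route is the more defensible one: the objection you raise applies to the paper's own diagonalization step, and the bonus-augmentation framing is the standard device in the linear-MDP literature the paper leans on. What you give up is the literal statement of the proposition --- you do not actually establish that the P-MDP ``is also a linear MDP'' in the sense of Assumption~\ref{assumption:linear}, only that the downstream regret analysis goes through as if it were; if the literal claim is wanted, one would have to either absorb the penalty into an augmented feature map or restate the proposition in your operative form.
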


\begin{proof}
Note that MOPO provides a lower bound on the expected discounted return in the real MDP by the expected discounted return in the model MDP.
\begin{equation}
	\label{supple:equ:lower-bound}
	\eta_{\tM}(\pi) = \Eisub{(s,a) \sim \rho^\pi_{\hatT}}\left[ r(s,a) - \gamma G^\pi_{\hatM}(s,a)\right]  \ge \Eisub{(s,a) \sim \rho^\pi_{\hatT}}\left[ r(s,a) - \gamma |G^\pi_{\hatM}(s,a)|\right].
\end{equation}
From Lemma~\ref{lem::xi}, we know:
\begin{equation}
		 |G_{\hatM}(s,a)| = |\widehat \cT V(s, a) - \cT V(s , a )| \leq \Gamma^{\rm lcb}(s , a) = \lambda_p u(s , a) ,
		 \label{temp2}
\end{equation}
Plugging Eq. (\ref{temp2}) into Eq. (\ref{supple:equ:lower-bound}) yields
\begin{equation}
	\label{supple:equ:P-MDP}
	\eta_{\tM}(\pi) \ge \Eisub{(s,a) \sim \rho^\pi_{\hatT}}\left[ r(s,a) - \lambda^p u(s,a) \right] =\eta_{\hatM}(\pi)-\lambda^p \epsilon_u(\pi) = \eta_{\pM}(\pi).
\end{equation}
which indicates that $\Gamma^{\rm lcb}(s , a)$ can be an admissible model uncertainty to construct the P-MDP.

Next we prove the constructed P-MDP is also a linear MDP.
Since $\Lambda =\sum_{i=1}^{m}\phi(s ^i,a ^i)\phi(s ^i,a ^i)^\top+\beta \cdot \mathrm{\mathbf{I}}$ is a symmetric matrix, $\Lambda ^{-1}$ is also a symmetric matrix.
	So we can diagonalize it as
	\begin{equation}
		\Lambda^{-1}  = P  \tilde \Lambda  P^\top,
	\end{equation}
where $P $ is a orthogonal matrix and $\tilde \Lambda $ is a diagonal matrix.
Then we have:
\begin{equation}
	\phi(s ,a )^\top\Lambda ^{-1}\phi(s ,a ) = \phi(s ,a )^\top P  \tilde \Lambda  P^\top  \phi(s ,a ) = (P^\top  \phi(s ,a ) )^\top  \tilde \Lambda  (P^\top  \phi(s ,a ) )
\end{equation}

Considering the definition of $\Gamma^{\rm lcb}(s ,a )$ in Equation~\ref{ucb+lcb}, $\Gamma^{\rm lcb}(s ,a )$ is linear with $P^\top  \phi(s ,a )$, and is linear with $\phi(s, a)$.
Therefore, $r^p = r - \lambda^p u(s,a)= \phi(s,a)^\top\widehat{w} - \lambda^p \cdot\big[\phi(s ,a )^\top\Lambda ^{-1}\phi(s ,a )\big]^{\nicefrac{1}{2}}$ is also linear with $\phi(s ,a )$.
The only difference between the model MDP and the P-MDP is the reward function, so the transition kernel is still linear with $\phi$. 
Therefore, the constructed P-MDP is also a linear MDP.
\end{proof}

\subsection{Proof of Theorem~\ref{the:ORPO}}

In our ORPO framework, we optimize the output policies with model-free manner in the P-MDP with reward bonus according to the O-MDP.
Since our constructed P-MDP is also linear, we can provide the LSVI version of ORPO.

\begin{algorithm}[h]
	\caption{Least-Squares Value Iteration with ORPO (LSVI-ORPO)}\label{algo:LSVI-ORPO}
	\begin{algorithmic}[1]
		\FOR{episode $k = 1, \ldots, K$}
		\STATE Receive the initial state $s^k_1$.
		\FOR{step $h = H, \ldots, 1$}
		\STATE $m \leftarrow k-1$
		\STATE $\Lambda^p_h \leftarrow \sum_{i =1}^{m}  \phi(s^{i}_h,  a^{i}_h)\phi(s^{i}_h, a^{i}_h)^\top + \beta \cdot \mathrm{\mathbf{I}}$. \label{line:Lambda}
		\STATE $w^p_h \leftarrow {\Lambda^p_h}^{-1} \sum_{i=1}^{m} \phi(s^{i}_h,  a^{i}_h) [r^{p}_h(s^{i}_h, a^{i}_h) + \max_a Q_{h+1}(s^{i}_{h+1}, a)]$. \label{line:w}
		\STATE $Q_h(\cdot, \cdot) \leftarrow \min\{{w^p_h}^\top\phi(\cdot, \cdot) + (\lambda^o+\lambda^p)  [\phi(\cdot, \cdot)^\top {\Lambda^p_h}^{-1} \phi(\cdot, \cdot)]^{1/2}, H\}$. \label{line:ucb}
		\ENDFOR
		\FOR{step $h = 1, \ldots, H$}
		\STATE Take action $a^k_h \gets  \argmax_{a \in \cA } Q_h(s^k_h, a)$, and observe $s^k_{h+1}$. 
		\ENDFOR
		\ENDFOR
	\end{algorithmic}
\end{algorithm}

Note that Algorithm~\ref{algo:LSVI-ORPO} is from the view of the constructed linear P-MDP, and the O-MDP is used to provide reward bonus $(\lambda^o+\lambda^p)  [\phi(\cdot, \cdot)^\top {\Lambda^p_h}^{-1} \phi(\cdot, \cdot)]^{1/2}$ to ``explore'' the P-MDP.
We define the value function in the P-MDP $V_{M^p}^{\pi}\colon \cS \to \mathbb{R} $ as the expected value of cumulative  rewards received under policy $\pi$ when starting from an arbitrary state at the $h$th step. Specifically, we have 
\begin{equation}
	V_{M^p}^{\pi}(s) := \E\left[\sum_{h' = h}^H r_{h'}(s_{h'}, \pi(s_{h'}, h'))  \bigg | s_h = s\right], \qquad \forall s\in \cS, h \in [H].
\end{equation}
Then the total regret of the output policy of ORPO $\pi^p$ in the P-MDP can be defined using the optimal policy in the P-MDP $\pi^\star$:
\begin{equation}
	\text{Regret}(K) = \sum_{k=1}^K \left[V_{M^p}^{\pi^p_\star}(s_1^k) -V_{M^p}^{\pi^p_k} (s_1^k)\right],
\end{equation}
where $K$ is the total number of episodes, $\pi^p_k$ is the policy in  the k-th episode and $\pi^p_\star$ is the optimal policy in the P-MDP.

\begin{lemma}[]
	\label{supple:prop:efficiency}
	Under the linear P-MDP, there 
	exists an absolute constant $c >0$ such that, for any fixed $p \in (0, 1)$, if we set $\beta = 1$ and  $(\lambda^o+\lambda^p) =  c \cdot d H \sqrt{\iota} $  with $\iota := \log (2dT/p)$,   
	then with probability $1-p$, the total regret of LSVI-ORPO in the P-MDP is at most $\cO(\sqrt{d^3 H^3T \iota^2})$, where $\cO(\cdot)$ hides only absolute constants, $d$ is the ambient dimension of feature space, $H$ is the length of each episode, and $T$ is the total number of steps. 
\end{lemma}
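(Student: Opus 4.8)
The plan is to recognize that LSVI-ORPO (Algorithm~\ref{algo:LSVI-ORPO}) is precisely the LSVI-UCB procedure of \citet{lsvi-2020} instantiated on the constructed linear P-MDP, so the claimed regret bound follows from their analysis once the bonus coefficient is verified to induce a valid upper-confidence bound. I may assume, from the preceding proposition, that $\pM$ is a linear MDP with the same feature map $\phi$ and a bounded reward $r^p(s,a) = \phi(s,a)^\top\widehat{w} - \lambda^p[\phi^\top\Lambda^{-1}\phi]^{1/2}$; hence every structural assumption required by the linear-MDP regret analysis holds in $\pM$. The one ingredient specific to ORPO is that the exploration bonus in line~\ref{line:ucb} carries the combined coefficient $(\lambda^o+\lambda^p)$: from the viewpoint of $\pM$, the O-MDP supplies extra optimism on top of the pessimistic penalty, and the \emph{total} $(\lambda^o+\lambda^p)u(s,a)$ is what must act as the UCB bonus. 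I therefore only need to check that this combined coefficient is set large enough, which is exactly the hypothesis $(\lambda^o+\lambda^p)=c\,dH\sqrt{\iota}$.

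First I would establish the self-normalized concentration bound for the ridge estimator $w_h^p$ computed in line~\ref{line:w}: for the empirical Bellman backup, the prediction error $|{w_h^p}^\top\phi(s,a)-(\widehat\cT V_{h+1})(s,a)|$ is controlled by $\beta_T\,[\phi^\top{\Lambda_h^p}^{-1}\phi]^{1/2}$ with $\beta_T=c\,dH\sqrt{\iota}$, uniformly over $(s,a)$ and over the class of plausible value functions. This is the standard step combining a covering-number argument over the value-function class with the self-normalized tail inequality of \citet{bandit-2011}, and the choice $(\lambda^o+\lambda^p)=\beta_T$ is made exactly so that the bonus dominates this error. Next I would prove optimism by backward induction on $h$: with probability at least $1-p$, the clipped estimate $Q_h$ of line~\ref{line:ucb} satisfies $Q_h(s,a)\ge Q_h^\star(s,a)$ for all $(s,a)$, where $Q_h^\star$ is the optimal action-value in $\pM$.

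Granting optimism, the per-episode gap $V_{M^p}^{\pi^p_\star}(s_1^k)-V_{M^p}^{\pi^p_k}(s_1^k)$ telescopes through a one-step recursion into a sum of the bonus terms evaluated along the trajectory actually visited, plus a martingale-difference term that I control by Azuma--Hoeffding. The final quantitative step is to bound the accumulated bonuses $\sum_{k=1}^K\sum_{h=1}^H[\phi(s_h^k,a_h^k)^\top{\Lambda_h^p}^{-1}\phi(s_h^k,a_h^k)]^{1/2}$ via the elliptical-potential (determinant--trace) lemma, which for each fixed $h$ gives $\sum_k[\cdots]^{1/2}\le\sqrt{2dK\iota}$ by Cauchy--Schwarz together with $\sum_k[\cdots]\le\cO(d\iota)$. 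Multiplying by the bonus scale $2\beta_T=\cO(dH\sqrt{\iota})$, summing over the $H$ steps, and substituting $T=KH$ yields total regret $\cO(H\cdot dH\sqrt{\iota}\cdot\sqrt{dK\iota})=\cO(\sqrt{d^3H^3T\iota^2})$, as claimed.

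I expect the main obstacle to be the uniform concentration over the data-dependent value-function class in the first step: the estimator $w_h^p$ is built from the very data used to form $Q_{h+1}$, so a naive union bound fails and one must instead cover the induced class of value functions and control the covering error, precisely as in \citet{lsvi-2020}. The remaining novelty-specific check, that replacing the single penalty coefficient by $(\lambda^o+\lambda^p)$ leaves the UCB property intact, is immediate once this coefficient is pinned to $\beta_T$, so the bulk of the difficulty is inherited from, and resolved by, the standard linear-MDP argument rather than by anything particular to ORPO.
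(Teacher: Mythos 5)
Your proposal is correct and follows essentially the same route as the paper, which simply defers this lemma to the LSVI-UCB analysis of \citet{lsvi-2020}; you have accurately reconstructed that argument (self-normalized concentration with a covering argument, optimism by backward induction, telescoping plus the elliptical-potential lemma) and correctly identified that the only ORPO-specific check is that the combined coefficient $(\lambda^o+\lambda^p)$ is pinned to the UCB scale $c\,dH\sqrt{\iota}$ on the linear P-MDP established in the preceding proposition. No gaps.
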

\begin{proof}
	See e.g., \cite{lsvi-2020} for a detailed proof.
\end{proof}

Lemma~\ref{supple:prop:efficiency} shows that the total regret of Algorithm~\ref{algo:LSVI-ORPO} can be upper-bounded.
Such regret bound directly translates to a sample complexity guarantee, or a probably approximately correct (PAC) guarantee in terms of the optimal policy.

\begin{lemma}[]
	\label{supple:prop:regret2return}
	With the fixed initial state $s_1$, with at least constant probability, LSVI-ORPO can learn an $\epsilon$-optimal policy $\pi^p$, which satisfies $V_{M^p}^{\pi^p_\star}(s) - V_{M^p}^{\pi^p}(s) \le \epsilon$ using $\ccO(\sqrt{d^3 H^4/\epsilon^2})$ samples.
\end{lemma}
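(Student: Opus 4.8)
The plan is to convert the regret guarantee of Lemma~\ref{supple:prop:efficiency} into the stated PAC / sample-complexity guarantee through a standard online-to-batch argument, specialized to the fixed-initial-state setting. First I would fix $s_1^k = s_1$ for every episode $k$, so that the regret of LSVI-ORPO in the linear P-MDP,
\begin{equation}
\text{Regret}(K) = \sum_{k=1}^K \left[V_{M^p}^{\pi^p_\star}(s_1) - V_{M^p}^{\pi^p_k}(s_1)\right],
\end{equation}
is at most $\cO(\sqrt{d^3 H^3 T \iota^2})$ with probability at least $1-p$, where $T=KH$ and the coefficient is set to $(\lambda^o+\lambda^p)=c\cdot dH\sqrt{\iota}$ as required by Lemma~\ref{supple:prop:efficiency}. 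The point to check before invoking it is that the bonus term $(\lambda^o+\lambda^p)\,[\phi(\cdot,\cdot)^\top (\Lambda^p_h)^{-1}\phi(\cdot,\cdot)]^{1/2}$ used in the $Q$-function update of Algorithm~\ref{algo:LSVI-ORPO} is exactly the UCB bonus certified by Proposition~\ref{uncertainty}, so that the regret analysis of the linear P-MDP applies verbatim.

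Next I would perform the online-to-batch conversion. Because every episode shares the initial state $s_1$ and each gap $V_{M^p}^{\pi^p_\star}(s_1) - V_{M^p}^{\pi^p_k}(s_1)$ is nonnegative, the average suboptimality over the $K$ produced policies equals $\text{Regret}(K)/K$. Concretely, defining the output policy $\pi^p$ to first draw an index $k$ uniformly from $\{1,\dots,K\}$ and then follow $\pi^p_k$ gives
\begin{equation}
V_{M^p}^{\pi^p_\star}(s_1) - \Esub{k}\bigl[V_{M^p}^{\pi^p_k}(s_1)\bigr] = \frac{\text{Regret}(K)}{K} \le \frac{\cO(\sqrt{d^3 H^3 T \iota^2})}{K}.
\end{equation}
Substituting $T=KH$, requiring this average suboptimality to fall below $\epsilon$, and solving for the number of episodes $K$ --- equivalently the number of model samples $T=KH$ consumed by LSVI-ORPO --- then yields the sample-complexity bound claimed in the lemma, with the logarithmic factor $\iota$ absorbed into $\ccO(\cdot)$.

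The main obstacle, and the reason the statement claims only \emph{constant} probability, is extracting a single policy with a deterministic-quality guarantee from this averaged bound. I would apply Markov's inequality to the nonnegative gap $V_{M^p}^{\pi^p_\star}(s_1) - V_{M^p}^{\pi^p_k}(s_1)$ under the uniform draw of $k$: with probability at least $1/2$ over the index, the realized policy $\pi^p_k$ has suboptimality at most $2\,\text{Regret}(K)/K$. Intersecting this event with the probability-$(1-p)$ event on which the regret bound of Lemma~\ref{supple:prop:efficiency} holds, and fixing $p$ to a small absolute constant, produces a single policy $\pi^p$ that is $\epsilon$-optimal in the P-MDP with at least constant probability, matching the form of the claim. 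The only additional care needed is to re-solve for $K$ after folding the factor $2$ and the union over the two independent randomness sources into the absolute constants, which does not change the order of the bound.
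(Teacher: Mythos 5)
Your proposal is correct and takes essentially the same route as the paper: the paper's entire proof of this lemma is a citation to \cite{jin2018q}, whose standard regret-to-PAC conversion is precisely your online-to-batch argument — fix $s_1$, bound average suboptimality by $\mathrm{Regret}(K)/K$ via Lemma~\ref{supple:prop:efficiency}, and extract one policy by uniform random selection plus Markov's inequality — a device the paper itself repeats in its proof of Theorem~\ref{the:ORPO} (the probability-$2/3$ random choice of $\pi_k$). One caveat: actually carrying out your final step, setting $\mathrm{Regret}(K)/K \le \epsilon$ with $T=KH$ gives $K = \ccO(d^3H^4/\epsilon^2)$ episodes, i.e.\ $\ccO(d^3H^5/\epsilon^2)$ samples, so the literal $\ccO(\sqrt{d^3H^4/\epsilon^2})$ in the statement appears to be a misplaced square root inherited from the regret bound, which your phrase ``solving for $K$ yields the claimed bound'' silently absorbs rather than derives.
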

\begin{proof}
	See e.g., \cite{jin2018q} for a detailed proof.
\end{proof}

Now we prove Theorem~\ref{the:ORPO}.

\begin{proof}
As an immediate corollary of Lemma~\ref{supple:prop:regret2return}, we can learn an $\epsilon$-optimal policy $\pi^p$ which satisfies
\begin{equation}
	\label{optimal}
	\eta_{\pM}(\pi^p) \ge \eta_{\pM}(\hat \pi^p) - \epsilon
\end{equation}
using $\ccO(|S|\sqrt{d^3 H^4/\epsilon^2})$ samples, where $\epsilon \in (0,H]$ is a constant, and $|S|$ is the cardinality of the state space. Note that there can be tighter bound for the number of samples. However, the samples of Algorithm~\ref{algo:LSVI-ORPO} is generated by our learned dynamics model instead of sampling from the environment.  So we do not delve into the sample complexity here.

From Equation~\ref{equ:lower-bound}, we have 
\begin{equation}
	\label{two-side}
		|\eta_{\hatM}(\pi) -  \eta_{M}(\pi) | \le \lambda^p \epsilon_u(\pi).
\end{equation} 

Then we have, for any policy $\pi$, with at least constant probability,
	\begin{align}
		\eta_{\tM}(\pi^p) &\ge \eta_{\tilM}(\pi^p) \tag{by~(\ref{equ:lower-bound})} \\
		&\ge  \eta_{\pM}(\hat \pi^p) - \epsilon \tag{by (\ref{optimal})}\\
		&\ge \eta_{\tilM}(\pi) - \epsilon \tag{by definition of $\hat \pi^p$} \\
		&= \eta_{\hatM}(\pi) - \lambda \epsilon_u(\pi) - \epsilon \nonumber \\
		&\ge \eta_{\tM}(\pi) - 2\lambda\epsilon_u(\pi) - \epsilon \tag{by (\ref{two-side})}
  \label{equation:lower-bound}
	\end{align}

ORPO can always find $\epsilon$-optimal policies, but can only output a policy that satisfies the lower bound \ref{equation:lower-bound} with a certain probability. This is because we can only select one policy from the set of policies generated during optimization in Algorithm 2.
In fact, when running $K$ episodes and randomly selecting $\pi = \pi_k$ for $k=1,2,\cdots,K$, $\pi$ can be lower-bounded with a probability of at least $2/3$~\cite{jin2018q}.

\end{proof}

\quad
\section{Implementation Details}
\label{Supp-prac}
We now present a practical instantiation of ORPO (Algorithm~\ref{broad-alg}).
The principal differences are the specialization of uncertainty quantifiers and the algorithms for training rollout and output policy.

\subsection{Uncertainty Heuristics}
In our theoretical analysis, we use the predictive standard deviation of the model  $\widehat{T}(\hat s' |s,a)$ to approximate the model error:
\begin{align}
u(s,a):=&Std[\hat T (s,a) ] \nonumber \\ \approx & \underbrace{{\sigma_A}}_{\text{Aleatoric Uncertainty}} + \underbrace{\sqrt{{\Esub{(s,a)\sim \rho^\pi_{\hatT}}(\hatT (s,a)^\top \hatT (s,a)}) - {E[\hatT (s,a)]^\top E[\hatT (s,a)]}}}_{\text{Epistemic Uncertainty}},
\end{align}
where ${\sigma_A}$ corresponds to the aleatoric uncertainty which stems from the behavior and interactions of the environment, while the second term corresponds to the epistemic uncertainty which means how much the model is uncertain about its predictions~\cite{UWAC,lockwood2022review}.

In practice, following prior model-based works, we use an ensemble of $N$ probabilistic dynamics models  $\widehat T = \{\widehat{T}_i(\hat s', \hat r |s,a)  = \mathcal{N}(\mu_i (s,a) , \Sigma_i (s,a) )\}_{i=1}^N$, and adopt several uncertainty heuristics to estimate $u(s,a)$.

\textbf{Max Aleatoric:}
$max_{1,2,\cdots,N}\Vert \Sigma_i (s,a) \Vert_F$, which corresponds to the maximum aleatoric error that captures both the epistemic and aleatoric uncertainty of the true dynamics.

\textbf{Ensemble Variance (Ensemble Var):}
${E(\mu_i (s,a)^\top \mu_i (s,a)+\Sigma_i (s,a)^\top \Sigma_i (s,a)}) - {E[\mu_i (s,a)]^\top E[\mu_i (s,a)]}$, which is a combination of epistemic and aleatoric model uncertainty.

\textbf{Ensemble Standard Deviation (Ensemble Std):}
The square root of the ensemble variance.

Lu et al.~\cite{lu2022revisiting} empirically shows that there is no uniform uncertainty heuristic that performs best in all environments.
They also reported the referenced heuristics and corresponding $\lambda^p$.
We adopt most of them and list all of the hyper-parameters in Appendix~\ref{supple-hyperparameters}.

\subsection{Practical Algorithmic Outline}
\label{appendix:practical}
In this section, we implement the practical ORPO algorithm based on MOPO~\cite{MOPO}.
Nevertheless, our method is not limited to MOPO and can apply to other model-based offline methods using P-MDPs like MOReL~\cite{MOReL}.
As outlined in Algorithm~\ref{detalied-alg}, the differences from MOPO are \textbf{highlighted}.
We utilize the optimistic rollout policy to roll out (lines 8-14).
Besides, we utilized the SAC algorithm~\cite{SAC} to train the optimistic rollout policy (line 24), and used TD3+BC for pessimistic offline policy optimization (line 25). 

\begin{algorithm}
	\caption{ORPO: Optimistic Rollout and Pessimistic Policy Optimization}\label{detalied-alg}
	\small
	\begin{algorithmic}[1]
		\STATE {\bfseries Input:} Offline dataset $\D$, coefficient for constructing P-MDP and \textbf{O-MDP} $\lambda^p, {\lambda^o}$, rollout horizon $h$, rollout batch size $b$. 
		\STATE Train on batch data $\D$ an ensemble of $N$ probabilistic dynamics $\widehat T = \{\widehat{T}_i(\hat s' |s,a)  = \mathcal{N}(\mu_i (s,a) , \Sigma_i (s,a) )\}_{i=1}^N$, and compute the mean of $\{\hat T_i\}_{i=1}^N$ as the dynamics model $\hat T$.
		\STATE Initialize the output policy $\Ppi$, \textbf{the rollout policy $\Opi$}, empty optimistic  buffer $\oD \leftarrow \varnothing$ and pessimistic replay buffers $\ppD \leftarrow \varnothing, \poD \leftarrow \varnothing$.
		\FOR{epoch $1, 2, \dots$}
		\FOR{$1, 2, \dots, b$ (in parallel)}
		\STATE Sample state $s_1$ from $\D$ for the initialization of the rollout.   
		\FOR{$j = 1, 2, \dots, h$}  
		\STATE \textbf{Sample an action $a^o_j \sim \pi^o(s_j)$.}
		\STATE \textbf{Sample $\hat s_{j+1},  r_j \sim \hat T(s_j,a_j)$.}
        \STATE \textbf{Compute $u(s_j,a_j)$ using a uncertainty heuristic.}
		\STATE \textbf{Compute ${r}^{p}_j = r_j {- \lambda_p u(s_j,a_j)}$. }
		\STATE \textbf{Compute ${r}^{o}_j = r_j {+ \lambda_o u(s_j,a_j) }$.} 
		\STATE \textbf{Add sample $(s_j, a_j, {r}^{p}_j, s_{j+1})$ to $\poD$.}
		\STATE \textbf{Add sample $(s_j, a_j, {r}^{o}_j, s_{j+1})$ to $\oD$.}
		\ENDFOR
		\FOR{$j = 1, 2, \dots, h$}
		\STATE Sample an action $a^p_j \sim \pi^p(s_j)$.
		\STATE  Sample $\hat s_{j+1},  r_j \sim \hat T(s_j,a_j)$.
		\STATE Compute $u(s_j,a_j)$ using a uncertainty heuristic.
		\STATE Compute ${r}^{p}_j = r_j {- \lambda_p u(s_j,a_j)}$. 
		\STATE Add sample $(s_j, a_j, {r}^{p}_j, s_{j+1})$ to $\ppD$.
		\ENDFOR
		\ENDFOR
		\STATE \textbf{Draw samples from $ \D \cup \oD$, and use SAC to update $\pi^o$. }
		\STATE \textbf{Draw samples from $ \D \cup \poD \cup \ppD$, and use TD3-BC to update $\pi^p$. } 
		\ENDFOR
		\STATE {\bfseries Output: }Optimized output policy $\Ppi$.
	\end{algorithmic}
\end{algorithm}

\quad
\section{Experiment Details}
\label{Supp-exp}

\subsection{Toy Example}
\label{exp-toy}
In this section, we give the detailed experimental setup of our toy task.
Our toy task is a 2-dimensional, continuous state space, continuous action space environment as shown in Figure \ref{fig-toy-scene}. The central point of the square region is $(0,0)$, and the state space gives $D \coloneqq [-3,3]\times[-3,3]$. 
The state information is composed of the coordinates of the agent, i.e., $s=(x,y), x,y\in[-3,3]$. 
Each episode, the agent randomly starts at the region between lines $y=-x-0.25$ and $y=-x+0.25$.
The goal is to move upper right to obtain high rewards and takes actions $a\in[-1,1]$. 
 The reward function $r(s,a)$ is depends on the distance to the line $y=-x$, which is defined in (\ref{eq:reward}).
\begin{equation}
	\label{eq:reward}
	r(s,a) = \begin{cases}
		\frac{|x+y|}{\sqrt{2}}, \quad \mathrm{if} \, x\ge-y, \\
		\frac{|x+y|}{\sqrt{2}}, \quad \mathrm{if} \, x<-y.
	\end{cases}
\end{equation}

The agent is not allowed to step out of the square region. The episode length for RiskWorld is set to be 10.
We initialize the agent randomly, take a random action and reset the agent to collect an offline dataset. This prove that the states of all transitions in the offline dataset is within the region between lines $y=-x-0.25$ and $y=-x+0.25$, and the next states near the line $y=-x$. Figure~\ref{fig:toy-distribution} shows the state distribution (blue) of the dataset.

The learned dynamics model $\hat T$ receives the current state and action as input, and outputs a multivariate Gaussian distribution that predicts the next state and reward.
The neural networks and the hyper-parameters are the same to other tasks in this work.
We estimate the uncertainty using the Ensemble Std of dynamics models.
Since the state distribution of the offline dataset is near the line $y=-x$, the uncertainty value decrease as the distance from the straight line increases, as shown in Figure~\ref{fig-toy-uncertainty}.

We set $\lambda_p=100$ for both MOPO and ORPO, and set $\lambda_p=1$ for ORPO.
Considering the simplicity of the toy task, we train both the MOPO and ORPO policies for 10 epochs containing 10000 gradient steps.
We evaluate the MOPO policy and the output policy for 5 episodes and visualize trajectories in Figure~\ref{fig-toy-mopo} and Figure~\ref{fig-toy-orpo} respectively.

We also evaluate both policies in the real environment for total 5000 time steps.
The visited states are shown in Figure~\ref{supple:trajectory_mopo} and Figure~\ref{supple:trajectory_orpo} respectively.
The policy trained with MOPO~\cite{MOPO} utilizing only P-MDP can not  reach regions with high reward but high uncertainty, where the average  cumulative reward received is only $7.3$.
With more OOD sampling and pessimistic policy optimization, ORPO agent can learn to reach states with high rewards and avoid regions with low reward, achieving the average cumulative reward $35.3$.

\begin{figure*}[!ht]
		\begin{center}
		\subfigure[Evaluation transitions of the policy trained in the P-MDP.
		\label{supple:trajectory_mopo}]
		{
			\centering
			\includegraphics[width=0.3\linewidth]{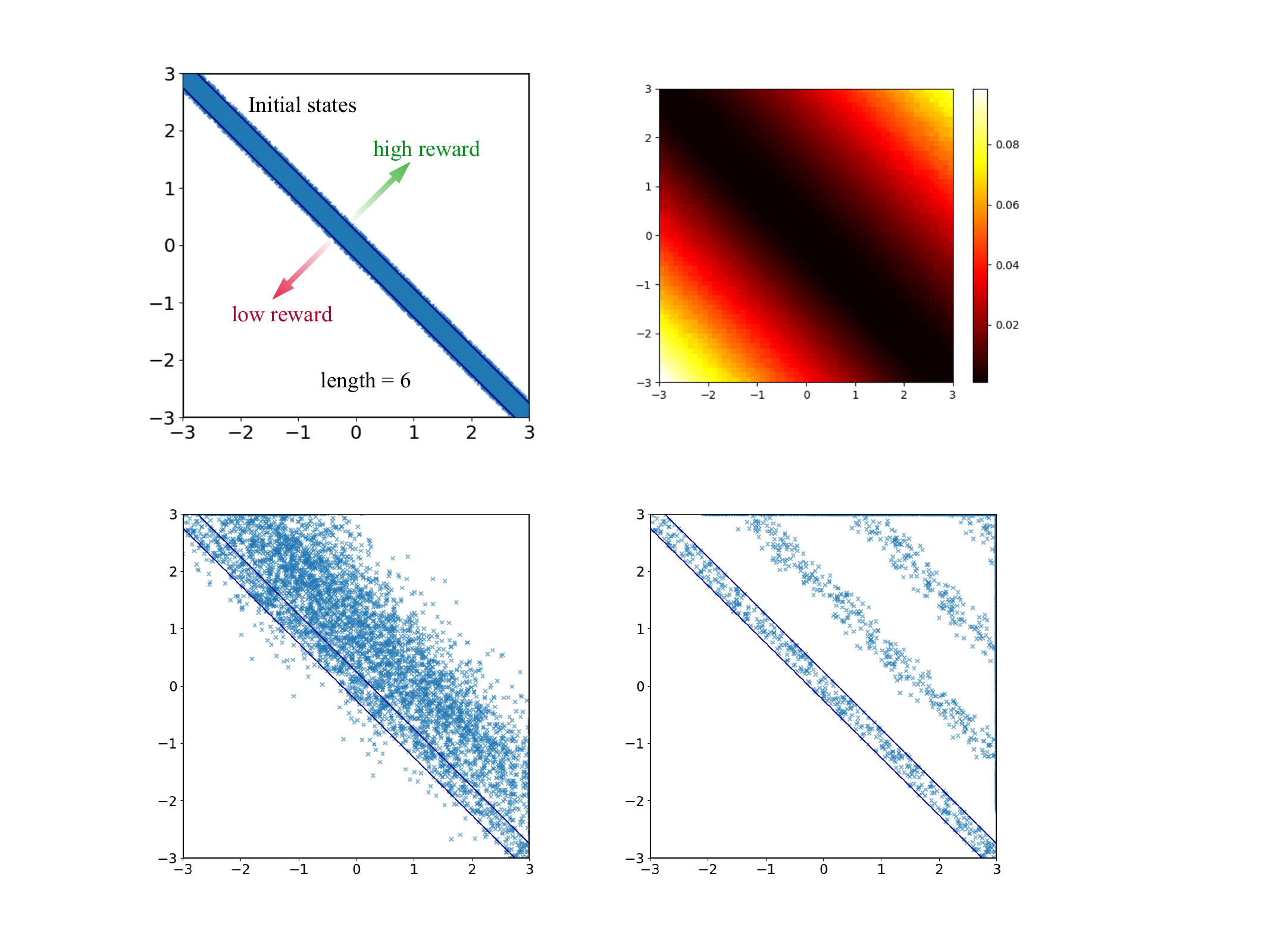}
		}
		\quad
		\subfigure[Evaluation transitions of the policy trained with ORPO.
		\label{supple:trajectory_orpo}]
		{
			\centering
			\includegraphics[width=0.3\linewidth]{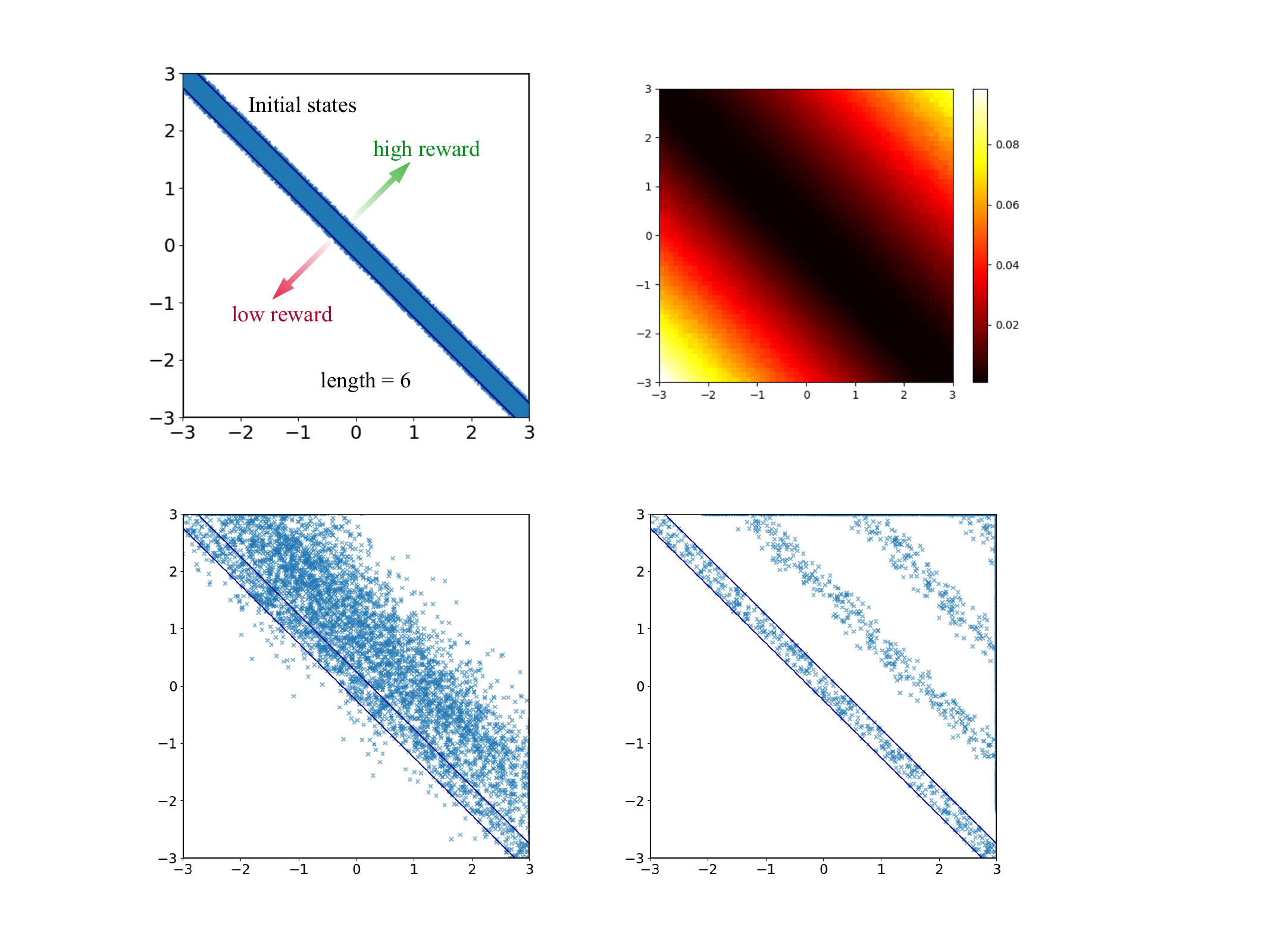}
		}
		\caption{
		Visited states after evaluation for 5000 time steps in the toy task.
		}\label{fig:toy-distribution}
			\end{center}
\end{figure*}

\subsection{D4RL Benchmark}
\label{exp-env}
The D4RL dataset~\cite{fu2020d4rl} is a collection of interactions with continuous action environments simulated on the MuJoCo physics engine~\cite{todorov2012mujoco}. Three tasks, including halfcheetah, hopper, and walker2d, are utilized in this dataset as shown in Figure \ref{fig:mujocoviaual}. The dataset is divided into four categories: ``random'', ``medium'', ``medium-replay'', and ``medium-expert''. The ``random'' category includes data from a randomly generated policy, ``medium'' includes experiences collected from an early-stopped policy trained by SAC~\cite{SAC} for 1M steps, ``medium-replay'' includes a replay buffer of a policy trained to the performance level of the ``medium'' agent, and "medium-expert'' is a combination of the ``medium'' data mentioned above and ``expert'' data gathered by the well-trained SAC policy completion at a 50-50 ratio.

\begin{figure}[!htb]
    \centering
    \subfigure[HalfCheetah]{
    \label{fig:halfcheetah}
    \includegraphics[scale=0.33]{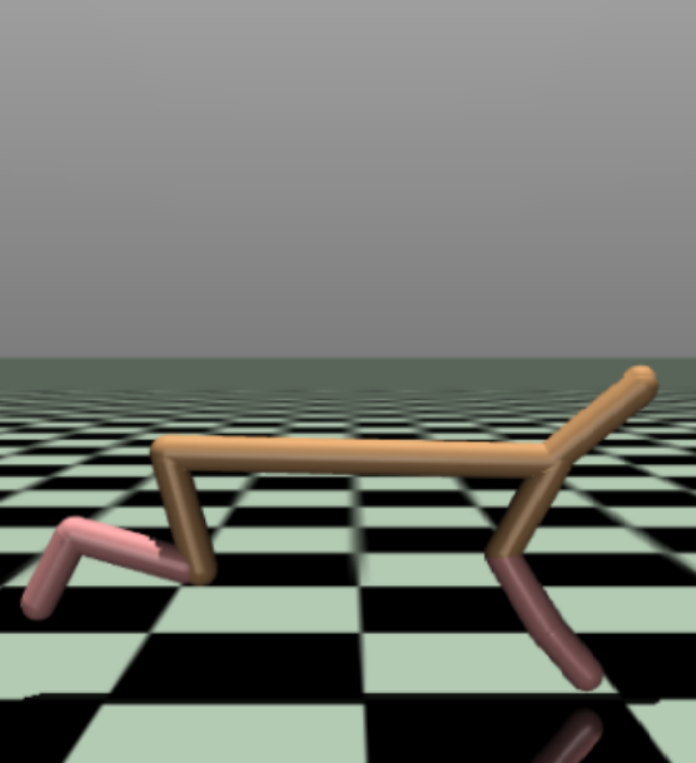}
    }
    \subfigure[Hopper]{
    \label{fig:hopper}
    \includegraphics[scale=0.33]{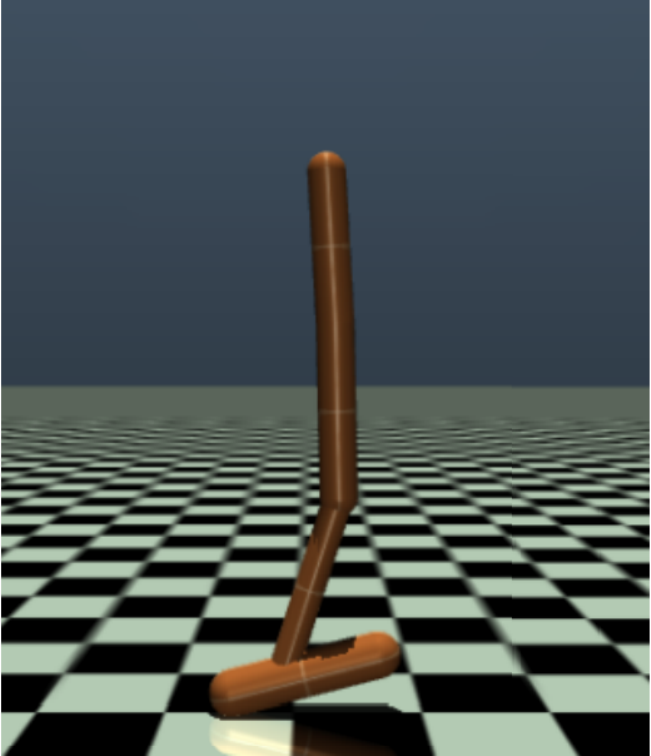}
    }
    \subfigure[Walker2d]{
    \label{fig:walker2d}
    \includegraphics[scale=0.33]{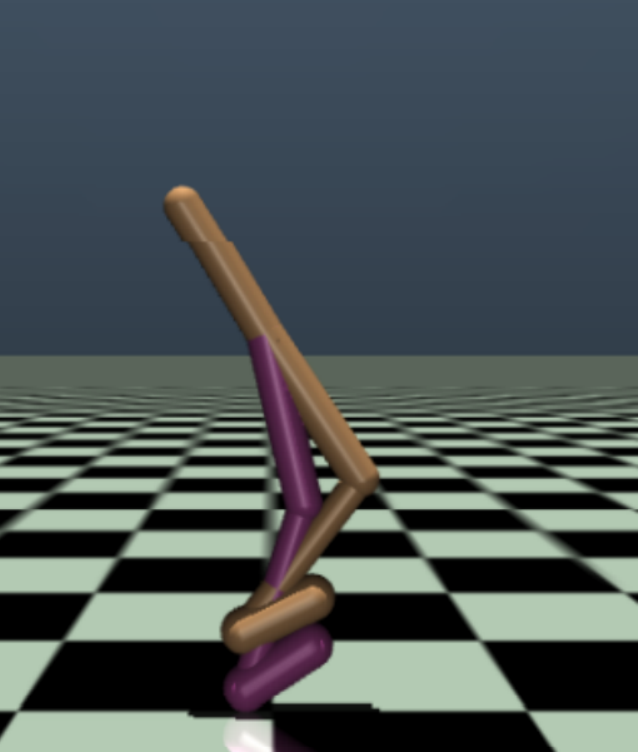}
    }
    \caption{Halfcheetah, hopper, and walker2d tasks.}
    \label{fig:mujocoviaual}
\end{figure}

The D4RL framework suggests using a normalized score metric to evaluate the performance of offline RL algorithms. This metric compares the expected return of an RL algorithm to that of a random policy and an expert policy. The expected return of a random policy on the dataset is represented as $C_r$ (reference minimum score), and the expected return of an expert policy is represented as $C_e$ (reference maximum score). When an offline RL algorithm's expected return is $C$, after being trained on a specific dataset, the normalized score $\tilde{C}$ can be calculated by
\begin{align}
    \label{eq:normalizedscore}
    \tilde{C} & = \frac{C - C_r}{C_e - C_r}\times 100 \nonumber \\ &= \frac{C - \mathrm{performance}\ \mathrm{of}\ \mathrm{random}\ \mathrm{policy}}{\mathrm{performance}\ \mathrm{of}\ \mathrm{expert}\ \mathrm{policy} - \mathrm{performance}\ \mathrm{of}\ \mathrm{random}\ \mathrm{policy}} \times 100.
\end{align}

The normalized score metric is used to evaluate the performance of offline RL algorithms and ranges roughly from 0 to 100. A score of 0 corresponds to the performance of a random policy, and 100 corresponds to the performance of an expert policy. The detailed reference minimum and maximum scores, represented as $C_r$ and $C_e$ respectively, are provided in Table \ref{tab:referencescore}. It is worth noting that the same reference minimum and maximum scores are applied across all tasks, regardless of the type of dataset.

\begin{table}[h]
	\centering
	\begin{tabular}{c|c|c}
		\toprule
		Task Name   & Reference min score $C_r$ & Reference max score $C_e$ \\
		\midrule
	    Halfcheetah & $-$280.18 & 12135.0 \\
	    Hopper & $-$20.27 & 3234.3 \\
		Walker2d & 1.63 & 4592.3 \\
		\bottomrule
	\end{tabular}
	\caption{The referenced min score and max score for the D4RL dataset.}
		\label{tab:referencescore}
\end{table}

\subsection{Tasks requiring policy to generalize}
\label{supple-exp-generalization}
The ``Halfcheetah-jump'' dataset proposed in~\cite{MOPO} requires the agent to simultaneously run and jump to achieve maximum height, utilizing a pre-existing offline training dataset of halfcheetah running.
Specially, the maximum velocity are reset from 1 to be 3.
The behavior policy is trained with the reward function 
\begin{equation}
	\label{equ:reward-original}
	r(s, a) = \max\{v_x, 3\} - 0.1*\|a\|_2^2,
\end{equation}
where $v_x$ denotes the velocity along the x-axis.
We use SAC to train the behavior policy for 1M steps with learning rate of $1e-5$ and use the entire training replay buffer as the trajectories for the offline dataset.
Then the new reward function is 
\begin{equation}
	\label{equ:reward-mopo}
	r(s, a) = \max\{v_x, 3\} - 0.1*\|a\|_2^2 + 15*z,
\end{equation}
where $z$ denotes the z-position of the halfcheetah.

\begin{figure}[h]
	\centering
	\subfigure[Halfcheetah with dangerous states.]{
		\label{fig:unhealthy}
		\includegraphics[scale=0.203]{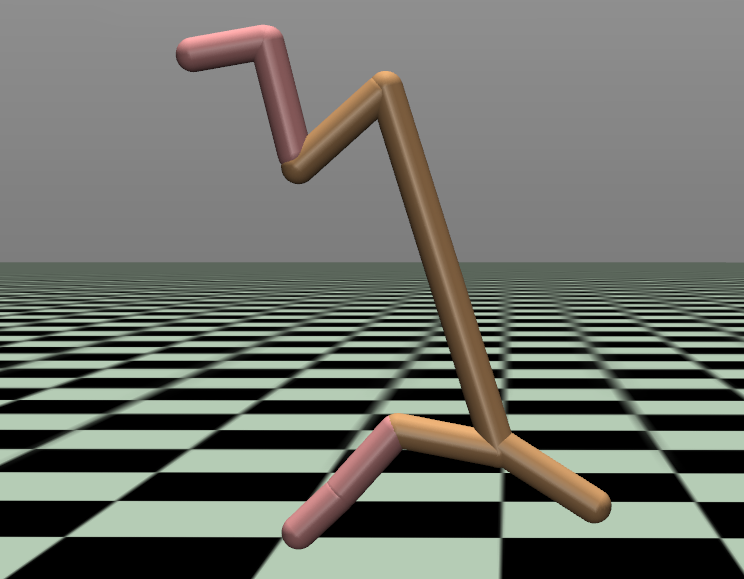}
	}
	\subfigure[Halfcheetah jumping healthily.]{
		\label{fig:healthy}
		\includegraphics[scale=0.2]{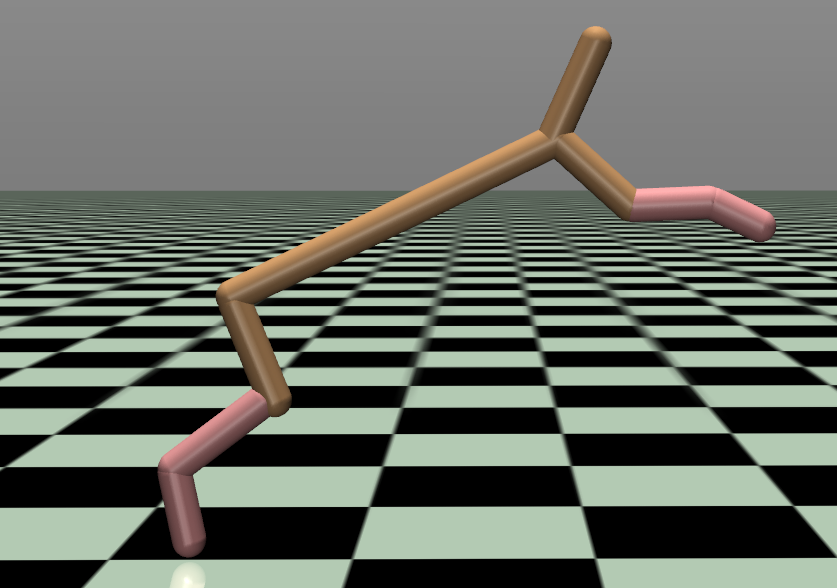}
	}
	\caption{Tasks requiring halfcheetah to generalize to jump.}
	\label{fig:halfcheetah-jump}
\end{figure}

We collect the ``Halfcheetah-jump-hard'' dataset based on ``Halfcheetah-jump''.
Firstly, we collect the whole offline dataset with a random policy that generates actions from the uniform distribution in the action space for 1M steps.
Secondly, we observe that when incentivize the halfcheetach to jump with the reward function defined in Equation~\ref{equ:reward-mopo},  halfcheetach can learn two kind of strategies, as shown in Figure~\ref{fig:halfcheetah-jump}.
To punish the halfcheetach with unhealthy dangerous states shown in Figure~\ref{fig:unhealthy} and only incentivize the halfcheetach to jump healthily as Figure~\ref{fig:healthy}, we modify the new reward function in the ``halfcheetah-jump-hard'' task to:
\begin{equation}
	r(s, a)  = \left\{
	\begin{array}{lcl}
		\max\{v_x, 3\} - 0.1*\|a\|_2^2 + 15*z,  &\text{if $y<1$,} \\ 
		{-3,} &\text{\ \  \  \  \ \  \ \text{if $y \ge 1$,} \ \ \ \ \ \ } 
	\end{array}  
	\right.
\end {equation}
where $y$ is the angle of the front tip of halfcheetach.

\subsection{Hyperparameters}
\label{supple-hyperparameters}
We adopt most hyper-parameters from optimized MOPO implementations~\cite{lu2022revisiting}, except for setting the number of ensembles $N=7$, the rollout length $h^p=1$ for walker2d tasks and $h^p=5$ for others.
We list the hyper-parameters of our implemented MOPO and ORPO in  Table 2.

\begin{table}[h]
	\centering
	\small
	 \resizebox{0.85\textwidth}{!}{
		\begin{tabular}{l|l|c|c|c}
			\toprule
			\textbf{\!\!\!Dataset type\!\!} & \textbf{Environment} &\makecell[c]{Optimized MOPO~\cite{lu2022revisiting} \\ $(N, \lambda^p,\text{type},h^p)$}  &\makecell[c]{MOPO (Our implementation) \\ $N=7,( \lambda^p,\text{type},h^p)$}& \makecell[c]{ORPO \\ $ (\lambda^o, h^o$)}\\ \midrule
			\!\!\!random & halfcheetah &10, 6.64,  Ensemble Std, 12&6.64,  Ensemble Std, 5&0.015, 5\\
			\!\!\!random & hopper & 6,  4.46, Max Aleatoric, 47 &\makecell[c]{ 5.9, Max Aleatoric, 5}&0.015, 5\\
			\!\!\!random & walker2d & 10, 0.21, Ensemble Var, 12  &0.21, Ensemble Var, 1 &0.002, 5 \\
			\!\!\!medium & halfcheetah & 12, 5.92, Ensemble Var, 6&5.92, Ensemble Var, 5&0.015, 1\\
			\!\!\!medium & hopper & 7, 5.0, Ensemble Std, 42  & \makecell[c]{5.0, Max Aleatoric, 5} &0.015, 1 \\
			\!\!\!medium & walker2d & 8, 5.28, Ensemble Std, 14  &5.28, Ensemble Std, 1&0.015, 1\\
			\!\!\!medium-replay &halfcheetah  & 11, 0.96, Ensemble Var, 37 &  0.96, Ensemble Var, 5& 0.002, 5\\
			\!\!\!medium-replay & hopper & 7, 5.9, Max Aleatoric, 5 & 2.5, Max Aleatoric, 5&0.005, 5\\
			\!\!\!medium-replay & walker2d & 13, 2.48, Ensemble Std, 47  &2.48, Ensemble Std, 1&0.015, 1\\
			\!\!\!med-expert\!\!\! & halfcheetah & 7, 4.56, Max Aleatoric, 5   &\makecell[c]{4.56, Max Aleatoric, 5} & 0.015, 1\\
		\!\!\!med-expert\!\!\! & hopper & 12, 39.08, Max Aleatoric, 43  &10, Max Aleatoric, 5 &0.015, 5 \\
		\!\!\!med-expert\!\!\! & walker2d & 12, 0.99, Ensemble Std, 37 &2.5, Max Aleatoric, 1&0.015, 1 \\
		\bottomrule
	\end{tabular}
}
	\caption{Tuned hyper-parameters used in the D4RL ``v2''  datasets.}
	\label{fig:d4rl-tuned-hyperparams}
	\normalsize
\end{table}

We set $\lambda^o=0.015$ by default for 9 out of the 12 datasets used in the D4RL benchmarks, expect for ``Walker2d-random-v2'', ``Halfcheetah-medium-replay-v2'' and ``Hopper-medium-replay-v2'' datasets.
We denote the rollout length of $\pi^o$ as $h^o$ and search over $h^o\in \{1,5\}$.
For ``Halfcheetah-jump'' and ``Halfcheetah-jump-hard'' datasets, we adopt the hyper-parameters used in MOPO, setting $\lambda^p=1$ and $h^p=5$. Further, we set $\lambda^o=0.1$ and $h^o=5$ for ORPO on these two datasets.

As a Dyna-style method, we update the rollout policy using the composite dataset $ \D \cup \ooD$.
Following MOPO, we sample a batch of $256$ transitions, $5\%$ of them from the replay buffer $\D$ with optimistic rollouts and the rest of them from $\oD$.
While for the pessimistic offline policy optimization, we utilize the composite datasets $\D \cup \poD \cup \ppD$.
We also sample a batch of $256$ transitions.
For the proportion of the three, we adopt $0.5:0.3:0.2$ on all ``medium-replay'' datasets and $0.05:0.45:0.5$ for others.
Besides, on  ``Halfcheetah-jump'' and ``Halfcheetah-jump-hard'' datasets, we set the proportion to $0.05:0.05:0.9$.

\subsection{Compute Infrastructure}
 The experiments are carried out on one NVIDIA GTX 3090 Ti GPU with Intel i9-12900K CPU and 64 GB memory.
 
\quad
\section{Extensive Experiments Results}

\subsection{Baselines}
\label{exp-all-baselines}
We compare ORPO with several state-of-the-art algorithms, including 
1) CQL~\cite{CQL}, a conservative Q-learning algorithm based on SAC~\cite{SAC} that minimizes Q-values of OOD actions.
2) TD3+BC~\cite{TD3BC}, a model-free algorithm that adopts adaptive BC constraint to regularize the policy in training.
3) MBPO~\cite{MBPO}, a model-based algorithm for online RL.
4) MOPO~\cite{MOPO}, a model-based algorithm based on MBPO, which further penalizes rewards by uncertainty.
5) MOPO (TD3+BC), our baseline that replaces SAC in MOPO with TD3+BC for policy optimization.
6) ORPO (SAC), same to our practical instantiation of ORPO except for utilizing SAC for pessimistic offline policy optimization.
7) OROO, an algorithm that  derived from MOPO by directly replacing the P-MDP with the O-MDP.
Our implementation of these baseline algorithms refers to their public source codes.
The implementation of CQL, TD3+BC is referred to \url{https://github.com/takuseno/d3rlpy/}.
In particular, we utilize the re-implementation of MOPO on \url{https://github.com/yihaosun1124/OfflineRL-Kit} due to its practical performance.

\subsection{Additional Ablation Study}
\label{appendix:baselines}

\begin{table}[h]\centering
	\addtolength{\tabcolsep}{-3pt}
	\renewcommand\arraystretch{0.8}
	\small
	\centering
	\resizebox{\linewidth}{!}{
		\begin{tabular}{c@{\hspace{3pt}}l@{\hspace{-0.5pt}}r@{\hspace{-1pt}}lr@{\hspace{-0.5pt}}lr@{\hspace{-0.5pt}}lr@{\hspace{-0.5pt}}lr@{\hspace{-0.5pt}}lr@{\hspace{-0.5pt}}lr@{\hspace{-0.5pt}}lr@{\hspace{-0.5pt}}lr@{\hspace{-0.5pt}}lr@{\hspace{-0.5pt}}lr@{\hspace{-0.5pt}}lr@{\hspace{-0.5pt}}lr}
			\toprule
			\multicolumn{1}{l}{}     &                                      & \multicolumn{4}{c|}{Model-free methods}  & \multicolumn{8}{c}{Model-based methods}  & \\
			\midrule
			\multicolumn{1}{l}{}     &                                      & \multicolumn{2}{c}{\makecell[c]{CQL \\ (2020 NeurIPS)}}  & \multicolumn{2}{c}{\makecell[c]{TD3+BC \\ (2021 NeurIPS)}}   & 
			\multicolumn{2}{c}{\makecell[c]{MBPO \\  (2019 NeurIPS)}}& \multicolumn{2}{c}{\makecell[c]{MOPO \\ (2020 NeurIPS)}} & 
			\multicolumn{2}{c}{\makecell[c]{ORPO \\ (Ours)}}&
			\multicolumn{2}{c}{\makecell[c]{ORPO (SAC) \\ (Baseline)}}&
			\multicolumn{2}{c}{\makecell[c]{OROO \\ (Baseline)}}& 
			\multicolumn{2}{c}{\makecell[c]{MOPO (TD3+BC) \\ (Baseline)}}\\
			\midrule
			\multirow{3}{*}{\rotatebox[origin=c]{90}{Random}} & HalfCheetah     & \colorbox{white}{27.0} & {\color[HTML]{525252} $\pm$0.6}& \colorbox{white}{11.3} & {\color[HTML]{525252} $\pm$0.5}  & \colorbox{white}{38.5} & {\color[HTML]{525252} $\pm$1.3}& \colorbox{white}{20.7}  & {\color[HTML]{525252} $\pm$1.8} &  \textbf{40.8} & {\color[HTML]{525252} $\pm$1.6} & \colorbox{white}{9.2} & {\color[HTML]{525252} $\pm$0.2} & \colorbox{white}{40.7} & {\color[HTML]{525252} $\pm$1.8} & \colorbox{white}{29.0} & {\color[HTML]{525252} $\pm$1.6}   \\
			
			& Hopper         & \colorbox{white}{16.2} & {\color[HTML]{525252} $\pm$2.5}  & \colorbox{white}{12.7} & {\color[HTML]{525252} $\pm$3.9}  &  \colorbox{white}{13.4} & {\color[HTML]{525252} $\pm$9.0} & \textbf{31.7}  & {\color[HTML]{525252} $\pm$0.3}    & \colorbox{white}{9.2} & {\color[HTML]{525252} $\pm$ 1.4} & \colorbox{white}{9.6} & {\color[HTML]{525252} $\pm$0.0} & \colorbox{white}{14.8} & {\color[HTML]{525252} $\pm$9.0} & \colorbox{white}{22.2} & {\color[HTML]{525252} $\pm$12.0} \\
			
			& Walker2d        & \colorbox{white}{1.2} & {\color[HTML]{525252} $\pm$0.5}  & \colorbox{white}{2.1} & {\color[HTML]{525252} $\pm$1.2}   & \colorbox{white}{1.2} & {\color[HTML]{525252} $\pm$1.6} & \colorbox{white}{1.7}  & {\color[HTML]{525252} $\pm$0.5}    & \textbf{10.8} & {\color[HTML]{525252} $\pm$9.3} & \colorbox{white}{0.1} & {\color[HTML]{525252} $\pm$0.0} & \colorbox{white}{3.2} & {\color[HTML]{525252} $\pm$1.4} & \colorbox{white}{2.6} & {\color[HTML]{525252} $\pm$1.6} \\
			\midrule
			\multirow{3}{*}{\rotatebox[origin=c]{90}{Medium}} & HalfCheetah      & \colorbox{white}{52.6} & {\color[HTML]{525252} $\pm$0.3} & \colorbox{white}{48.4} & {\color[HTML]{525252} $\pm$0.3}    &  \colorbox{white}{64.1} & {\color[HTML]{525252} $\pm$ 10.3}&
			\colorbox{white}{71.1}  & {\color[HTML]{525252} $\pm$2.6}   &  \textbf{73.4} & {\color[HTML]{525252} $\pm$0.5} & \colorbox{white}{68.3} & {\color[HTML]{525252} $\pm$0.3} & \colorbox{white}{69.5} & {\color[HTML]{525252} $\pm$ 9.3}& \colorbox{white}{68.1} & {\color[HTML]{525252} $\pm$1.5}  \\
			
			& Hopper       & \textbf{78.9} & {\color[HTML]{525252} $\pm$6.4}  & \colorbox{white}{56.4} & {\color[HTML]{525252} $\pm$4.9}  &  \colorbox{white}{1.0} & {\color[HTML]{525252} $\pm$0.4} &
			\colorbox{white}{20.7}  & {\color[HTML]{525252} $\pm$12.9}   & \colorbox{white}{30.4} & {\color[HTML]{525252} $\pm$37.4}   & \colorbox{white}{2.7} & {\color[HTML]{525252} $\pm$0.0}  & \colorbox{white}{1.2} & {\color[HTML]{525252} $\pm$0.7} & \colorbox{white}{33.4} & {\color[HTML]{525252} $\pm$32.3}\\
			
			& Walker2d        & \colorbox{white}{82.2} & {\color[HTML]{525252} $\pm$2.6}  & \textbf{80.8} & {\color[HTML]{525252} $\pm$2.9}  & \colorbox{white}{7.3} & {\color[HTML]{525252} $\pm$13.0} &
			\colorbox{white}{16.8}  & {\color[HTML]{525252} $\pm$15.0}  & \colorbox{white}{55.5} & {\color[HTML]{525252} $\pm$23.4} & \colorbox{white}{8.1} & {\color[HTML]{525252} $\pm$0.8} & \colorbox{white}{6.9} & {\color[HTML]{525252} $\pm$11.6}& \colorbox{white}{42.5} & {\color[HTML]{525252} $\pm$19.3}   \\
			
			\midrule
			\multirow{3}{*}{\rotatebox[origin=c]{90}{\shortstack{Medium\\Replay}}} & HalfCheetah      & \colorbox{white}{49.5} & {\color[HTML]{525252} $\pm$0.5}  & \colorbox{white}{44.2} & {\color[HTML]{525252} $\pm$0.5}  & 
			\colorbox{white}{48.1} & {\color[HTML]{525252} $\pm$ 24.5}&
			\colorbox{white}{62.5}  & {\color[HTML]{525252} $\pm$10.4}    &
			\textbf{72.8} & {\color[HTML]{525252} $\pm$0.9} & \colorbox{white}{57.5} & {\color[HTML]{525252} $\pm$3.0} &
			\colorbox{white}{41.4} & {\color[HTML]{525252} $\pm$27.3}& \colorbox{white}{65.6} & {\color[HTML]{525252} $\pm$1.8} \\
			
			& Hopper       & \colorbox{white}{99.2} & {\color[HTML]{525252} $\pm$1.6}  & \colorbox{white}{56.3} & {\color[HTML]{525252} $\pm$20.8}   & \colorbox{white}{98.7} & {\color[HTML]{525252} $\pm$9.2} &
			{100.8}  & {\color[HTML]{525252} $\pm$4.9}  & \textbf{104.6} & {\color[HTML]{525252} $\pm$1.5} & \colorbox{white}{51.8} & {\color[HTML]{525252} $\pm$0.2} & \colorbox{white}{92.7} & {\color[HTML]{525252} $\pm$20.0}& \colorbox{white}{88.4} & {\color[HTML]{525252} $\pm$29.7} \\
			
			& Walker2d       & \colorbox{white}{80.7} & {\color[HTML]{525252} $\pm$10.7} & \colorbox{white}{75.7} & {\color[HTML]{525252} $\pm$7.6}   &  \colorbox{white}{75.5} & {\color[HTML]{525252} $\pm$10.8}&
			\colorbox{white}{80.0}  & {\color[HTML]{525252} $\pm$8.9}   &  \textbf{91.1} & {\color[HTML]{525252} $\pm$2.0}& \colorbox{white}{54.2} & {\color[HTML]{525252} $\pm$26.9} & \colorbox{white}{68.0} & {\color[HTML]{525252} $\pm$17.5} & \colorbox{white}{73.3} & {\color[HTML]{525252} $\pm$13.2} \\
			
			\midrule
			\multirow{3}{*}{\rotatebox[origin=c]{90}{\shortstack{Medium\\Expert}}} & HalfCheetah      & \colorbox{white}{64.2} & {\color[HTML]{525252} $\pm$11.5}  & \colorbox{white}{86.0} & {\color[HTML]{525252} $\pm$6.7}  & \colorbox{white}{73.8} & {\color[HTML]{525252} $\pm$9.0}&
			\colorbox{white}{80.8}  & {\color[HTML]{525252} $\pm$11.4}      & \textbf{101.5} & {\color[HTML]{525252} $\pm$3.1}& \colorbox{white}{78.7} & {\color[HTML]{525252} $\pm$7.5} & \colorbox{white}{81.6} & {\color[HTML]{525252} $\pm$10.8}& \colorbox{white}{95.0} & {\color[HTML]{525252} $\pm$8.1}\\
			
			& Hopper          & \colorbox{white}{68.2} & {\color[HTML]{525252} $\pm$25.1} & \colorbox{white}{100.0} & {\color[HTML]{525252} $\pm$9.8}   & \colorbox{white}{5.9} & {\color[HTML]{525252} $\pm$6.7}&
			\colorbox{white}{21.1}  & {\color[HTML]{525252} $\pm$20.0} &  \textbf{111.0} & {\color[HTML]{525252} $\pm0.6$}  & \colorbox{white}{2.5} & {\color[HTML]{525252} $\pm$0.1} & \colorbox{white}{5.6} & {\color[HTML]{525252} $\pm$8.3}& \colorbox{white}{101.4} & {\color[HTML]{525252} $\pm$18.3}  \\
			
			& Walker2d      & \colorbox{white}{109.6} & {\color[HTML]{525252} $\pm$0.3}  & \textbf {110.3} & {\color[HTML]{525252} $\pm$0.5}  &  \colorbox{white}{24.1} & {\color[HTML]{525252} $\pm$37.9}&
			\colorbox{white}{102.1}  & {\color[HTML]{525252} $\pm$8.5}  &  {108.8} & {\color[HTML]{525252} $\pm$3.2} & \colorbox{white}{5.8} & {\color[HTML]{525252} $\pm$1.7} & \colorbox{white}{12.1} & {\color[HTML]{525252} $\pm$18.1}& \colorbox{white}{95.3} & {\color[HTML]{525252} $\pm$16.8}  \\
			
			
			\bottomrule
		\end{tabular}
	}
	\caption{Average normalized score and the standard deviation of all baseline methods with the `v2' dataset of D4RL. The highest-performing scores are highlighted. 
		We run all methods over 5 different seeds and take the average scores over the last 10 iterations of evaluation.
	} \label{table:all-results}
\end{table}

In this section, we provide more results on the D4RL benchmark that we omit in the main text due to space limitations.
We show the full performance comparison of ORPO against related methods as well as our baselines.

\subsubsection{The effect of offline RL for pessimistic policy optimization:}We set the baseline MOPO (TD3+BC) which replace SAC in MOPO with TD3+BC to migrate the effect of different RL algorithms on performance gain over MOPO.

\subsubsection{The effect of RL algorithms for pessimistic policy optimization:}
We study the effect of offline policy optimization algorithm on the performance.
In our practical instantiation of ORPO, we utilize the model-free offline algorithm, TD3+BC, for offline policy optimization.
We additionally set up a baseline, ORPO (SAC), utilizing the online RL algorithms, SAC to optimize the output policy. 
We summarize the results in Table~\ref{table:all-results}.
As shown, utilizing SAC for pessimistic offline policy optimization can not achieve satisfactory performance, which demonstrates that utilizing offline algorithms for policy optimization in ORPO is effective.
We believe future advances of offline RL algorithms can further improve the performance of our ORPO framework.

\subsubsection{The potential benefits of the O-MDP:}
In previous sections, we show that OROO is enough to improve performance in some datasets.
In summary, according to the results in Table~\ref{table:all-results}, we observe that OROO outperforms MBPO on 6 out of 12 datasets and MOPO on 3 out of 12 datasets, which indicates the potential benefits of the O-MDP in model-based offline RL.

\begin{figure}[h]
	\centering
	\includegraphics[scale=0.5]{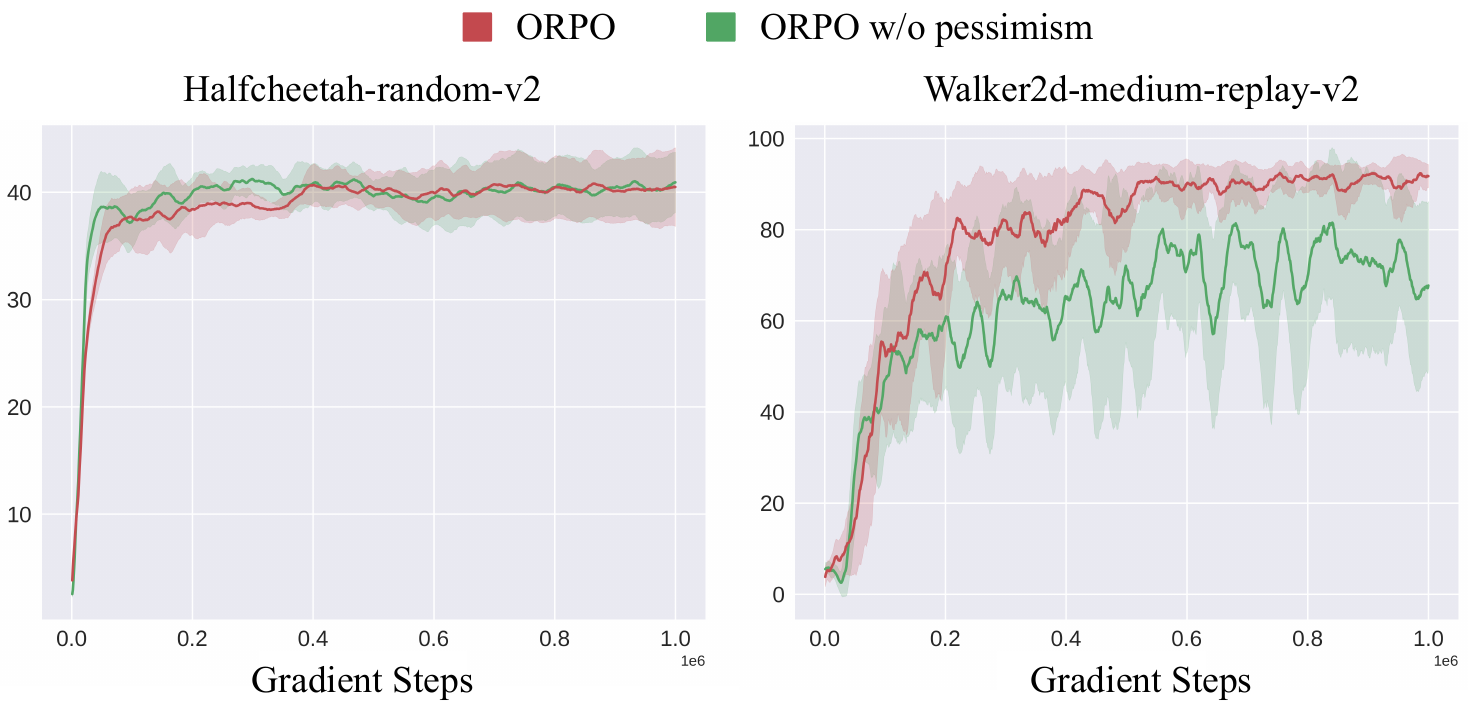}
	\caption{Learning curves of ORPO and the baseline optimizing output policies in the model MDP instead of the P-MDP.}
	\label{fig:pessimism}
\end{figure}

\subsubsection{The necessity of pessimism:}
Our further study isolates the effect of pessimism on the performance.
We set another baseline named ``ORPO w/o pessimism'', which replaces the P-MDP used in ORPO with the model MDP.
As shown in the Figure~\ref{fig:pessimism}, we observe that this baseline can match the performance of ORPO in the ``Halfcheetah-random-v2'' dataset.
Note that OROO can also perform well on this dataset, achieving state-of-the-art performance.
This indicates that optimism is effective for this dataset and there is no need to be pessimistic.
However, for various datasets where pessimism is important, such as  ``Walker2d-medium-replay-v2'', we can observe that ORPO outperforms the baseline without pessimism for a large margin.
We conclude that ORPO is robust to adapt to various datasets that either pessimism or optimism is more important.


 \subsection{Training Time of ORPO}
As we mentioned in Section~\ref{sec:conclusion}, one of the limitations of ORPO is that we need to train an rollout policy, which takes an additional time cost.
We train ORPO and various related baselines on two datasets and record the training time in Table ~\ref{tab:training_time} for quantification.
Since ORPO is an algorithmic framework, our training time depends on the selected algorithms to train the rollout policy and the output policy.
Note that our practical instantiation is based on MOPO and TD3+BC.
From the comparison of training time, we conclude that most time is spent during training the rollout policy and the output policy.
The training time of ORPO is approximately the sum of the time of TD3+BC and MOPO.
We also record the training time of the baseline with the well-trained optimistic rollout policy proposed in Section~\ref{sec:rollout}.
Results show that with pre-trained rollout policies, the additional time overhead compared to the P-MDP baseline is very little.

\begin{table}[h]
	\centering
		\resizebox{0.87\textwidth}{!}{
		\begin{tabular}{c|c|c| c | c | c}
			\toprule
			{Datasets}&TD3+BC&CQL &MOPO& ORPO  & \makecell[c]{ORPO (with trained rollout policy)  }   \\ 
			\midrule
			HalfCheetah-Random-v2 &3896&11945& 8054&13275&9869   \\
			Walker2d-Medium-replay-v2 &4241&11796&6686 &9334  &6715\\ 
			\bottomrule 
		\end{tabular}
		}
	\caption{The training time (seconds) of ORPO and various baselines.}
\label{tab:training_time}
\end{table}

\end{document}